\theoremstyle{definition}
\newtheorem{assumption}{Assumption}
\newtheorem{theorem}{Theorem}
\newtheorem{definition}{Definition}
\newtheorem{lemma}{Lemma}
\newtheorem{remark}{Remark}
\newtheorem{proposition}{Proposition}
\newcommand{\yulian}[1] {{\footnotesize\color{red}[Yulian: #1]}}
\newcommand{\R}{\mathbb{R}}
\renewcommand{\>}{\rangle}
\newcommand{\1}{\mathbbm{1}}
\renewcommand{\O}{\mathcal{O}}
\DeclareMathOperator{\E}{\mathbb{E}}
\title{Quantum Heavy-tailed Bandits}
\author{
 Yulian Wu$^1$\thanks{ The first two authors contributed equally.}
\and
Chaowen Guan$^2$\footnotemark[1]
\and
Vaneet Aggarwal$^3$\And
Di Wang$^1$
\affiliations
$^1$KAUST\\
$^2$The Pennsylvania State University\\
$^3$Purdue University 
\emails
yulian.wu@kaust.edu.sa,
cmg6558@psu.edu,
vaneet@purdue.edu,
di.wang@kaust.edu.sa
}
\begin{document}

\maketitle

\begin{abstract}
    In this paper, we study multi-armed bandits (MAB) and stochastic linear bandits (SLB) with heavy-tailed rewards and quantum reward oracle. Unlike the previous work on quantum bandits that assumes bounded/sub-Gaussian distributions for rewards, here we investigate the quantum bandits problem under a weaker assumption that the distributions of rewards only have bounded $(1+v)$-th moment for some $v\in (0,1]$. In order to achieve regret improvements for heavy-tailed bandits, we first propose a new quantum mean estimator for heavy-tailed distributions, which is based on the Quantum Monte Carlo Mean Estimator  and achieves a quadratic improvement of estimation error compared to the classical one. Based on our quantum mean estimator, we focus on quantum heavy-tailed MAB and SLB and propose quantum algorithms based on the Upper Confidence Bound (UCB) framework for both problems with $\Tilde{O}(T^{\frac{1-v}{1+v}})$ regrets, polynomially improving the dependence in terms of $T$ as compared to classical (near) optimal regrets of $\Tilde{O}(T^{\frac{1}{1+v}})$, where $T$ is the number of rounds. Finally, experiments also support our theoretical results and show the effectiveness of our proposed methods.
\end{abstract}

\section{Introduction}

As a fundamental model to deal with the uncertainty in decision-making problems, bandits problem, originally introduced by Thompson in 1933 \cite{thompson1933likelihood}, has wide applications including recommendation system \cite{tang2013automatic},  dynamic pricing \cite{cohen2020feature}, and medicine \cite{gutierrez2017multi}, to name a few. A bandit problem is a sequential game between a learner and an environment. The game is played over $T$ rounds. In each round $t \in [T]$, the learner first chooses an action from a given set, and the environment then reveals a reward. The objective for the learner is to choose actions that lead to the largest possible cumulative reward over $T$ rounds, i.e., to minimize the cumulative regret which is defined as the difference between the maximum expected reward and the expected reward collected by the learner.

While there are tremendous  studies on bandits problem, most of them only consider the classical setting. Bandits problem in the quantum setting has received much attention recently due to its superiority to the classical setting on the bound of regret. For example, \cite{casale2020quantum} and \cite{wang2021quantum} provide the first study on the exploration of quantum multi-armed bandits (MAB) with binary rewards. \cite{lumbreras2022multi} studies the trade-off between exploration and exploitation in quantum MAB  under bounded rewards assumption, where a  the lower bound of  regret as $\Omega(\sqrt{T})$ was shown. Recently, \cite{wan2022quantum} studies the problems of MAB and stochastic linear bandits (SLB) with quantum reward oracle under the assumption that the reward distributions are bounded or have bounded variance. Specifically, it shows that it is possible to achieve a logarithmic regret, which is an exponential improvement as compared to the optimal rate of $\tilde{\Theta}(\sqrt{T})$ (if we omit other terms) in the classical setting.

Although there are some results on quantum bandits problem, most of them rely on the assumptions that the reward distributions are light-tailed, such as bounded or with bounded variance. However, in a wide variety of real-world online decision-making systems such as financial portfolio \cite{bradley2003financial}, online user behavior \cite{kumar2010characterization} and  localization error \cite{ruotsalainen2018error}, rewards are generated from heavy-tailed distributions. 
Although there is no existing bandits algorithm that is  designed for handling heavy-tailed rewards in the quantum setting, several work has studied the problem in 
the classical setting. \cite{bubeck2013bandits} first investigates the problem of classical stochastic MAB with heavy-tailed rewards where the reward distribution of each action has finite $(1+v)$-th moment with $v\in (0, 1]$. Under this assumption, several extensions  have been studied including linear bandits \cite{medina2016no,shao2018almost,xue2020nearly,xue2020nearly}, pure exploration (best arm identification) \cite{yu2018pure}, Lipschitz bandits \cite{lu2019optimal}, private MAB \cite{tao2022optimal}, 
and Bayesian optimization \cite{ray2019bayesian}. Under the finite $(1+v)$-th raw moment assumption, \cite{lee2020optimal} shows that the optimal rate of MAB is $\Tilde{\Theta}(T^{\frac{1}{1+v}})$ with respect to $T$. For SLB with infinite arms, under the heavy-tailed setting, \cite{shao2018almost} establishes an upper bound of $\Tilde{O}(dT^{\frac{1}{1+v}})$ and provides an $\Omega(d T^{\frac{1}{1+v}})$ lower bound, where $d$ is the dimension of contextual information.

 Based on the above discussions on quantum bandits with light-tailed rewards  and classical heavy-tailed bandits, a natural question is:
 
\noindent \textit{What are the theoretical behaviors of heavy-tailed bandits in the quantum setting, and what are the improvements of regret as compared to the classical setting?}

In order to answer these questions, in this paper, we focus on the algorithms and theoretical regret bounds for heavy-tailed MAB and SLB with quantum reward oracle where the reward distribution of each action  has bounded $(1+v)$-th moment for some $v\in (0,1]$. To the best of our knowledge, we are the first to study quantum heavy-tailed bandits. Specifically, our contributions can be summarized as follows (see Table \ref{tab:1} for details):
\begin{itemize}
    \item To design algorithms for quantum heavy-tailed bandits, we first focus on  quantum mean estimation for one-dimensional heavy-tailed distributions. Specifically, based on the Quantum Monte Carlo Mean Estimator \cite{montanaro2015quantum}, we develop the Quantum Truncated Mean Estimator (QTME). QTME could achieve an estimation error of $\Tilde{O}\left({n^{-\frac{2v}{1+v}}}\right)$, which  quadratically improves  the classical error bound of $\Tilde{O}\left({n^{-\frac{v}{1+v}}}\right)$ in \cite{bubeck2013bandits}, where $n$ is the number of samples (or the quantum oracle complexity in the quantum setting). To the best of our knowledge, this is the first result on quantum mean estimation for heavy-tailed distributions, and it can be applied in other related problems. 
    
    \item Based on QTME, we design the first quantum version of Upper Confidence Bound (UCB)-type algorithms for MAB with heavy-tailed rewards. Specifically, we propose a quantum batch UCB algorithm. We also prove the regret bound of $\tilde{O}\left(T^{\frac{1-v}{1+v}}\right)$ for the algorithm,  which improves a  factor of  $\tilde{O}(T^{\frac{v}{1+v}})$ as compared to the classical one.
    \item  Then we develop the first quantum  algorithm and provide theoretical bounds for SLB with heavy-tailed rewards. To obtain improvement for heavy-tailed SLB, we design a UCB-type algorithm, namely Heavy-QLinUCB, based on QTME and the weighted least square estimator. We prove a regret bound which achieves an improvement of factor $\tilde{O}(T^{\frac{v}{1+v}})$ on $T$ as compared to the classical one.
    \item Finally, experiments also support our theoretical results and outperform the classical algorithms.
\end{itemize}
Due to the space limit, all proofs and additional experiments are included in Appendix of Supplementary Materials.

\begin{table*}[htb]
    \centering
\begin{tabular}{ccccc}
\hline\textbf{ Model} & \textbf{Reference} & \textbf{Setting} & \textbf{Assumption} & \textbf{Regret} \\
\hline \hline \multirow{5}{*}{MAB} & \cite{lattimore2020bandit} & Classical & sub-Gaussian & $\Theta(\sqrt{K T})$ \\
\cline{2-5}&\cite{bubeck2013bandits,lee2020optimal} &Classical& heavy-tail& $\Theta((K \log T)^{\frac{v}{1+v}}T^{\frac{1}{1+v}})$\\
\cline{2-5}  & \cite{wan2022quantum}& Quantum & bounded value & $O\left(K \log (T)\right)$ \\
\cline{2-5} & \cite{wan2022quantum} & Quantum & bounded variance & $O\left(K \log ^{5 / 2}(T) \log \log (T)\right)$ \\
\cline{2-5} & {\bf This paper} (Theorem \ref{thm:reg1})&Quantum  & heavy-tail & $O\left(K T^{\frac{1-v}{1+v}} \log T \right) $\\
\hline \hline \multirow{5}{*}{SLB}  & \cite{lattimore2020bandit} & Classical & sub-Gaussian & $\widetilde{\Theta}(d \sqrt{T})$ \\
\cline{2-5} &\cite{shao2018almost}& Classical & heavy-tail& $\Tilde{\Theta}(T^{\frac{1}{1+v}})$\\ 
\cline{2-5}  & \cite{wan2022quantum} & Quantum & bounded value & $O\left(d^2 \log ^{5 / 2}(T)\right)$ \\
\cline{2-5}  & \cite{wan2022quantum} & Quantum & bounded variance & $O\left(d^2 \log ^4(T) \log \log (T)\right)$ \\
\cline{2-5} & {\bf This paper} (Theorem \ref{thm:LinearReg})& Quantum& heavy-tail & $O\left( d^2   T^{\frac{1-v}{1+v}} (\log T)^\frac{3}{2} \left( \log\log T \right)^{\frac{2v}{1+v}} 
    \right)$\\
\hline \hline
\end{tabular}
\caption{Regret bounds on multi-armed bandits (MAB) and stochastic linear bandits (SLB).}
\label{tab:1}
\end{table*}

\section{Related Work}
Table \ref{tab:1} shows key comparisons to the previous work on classical heavy-tailed MAB and SLB, and the work on quantum MAB and SLB (with light-tailed reward distributions). 

\noindent \textbf{Quantum Mean Estimation.} There is a series of quantum mean estimators for bounded   random variables \cite{grover1998framework,brassard2011optimal,brassard2002quantum,abrams1999fast}. For distributions with bounded variance, \cite{hamoudi2021quantum} develops a mean estimator which achieves a quadratic improvement on the estimation error as compared to the classical one. \cite{montanaro2015quantum}  also presents the Quantum Monte Carlo method to estimate the mean of random variables with bounded variance. Such method is later applied by \cite{wan2022quantum} in quantum bandits with bounded rewards to get confidence upper bounds  of rewards, which is the key point of its UCB-type algorithms. However, all of those methods cannot handle heavy-tailed distributions. 

\noindent \textbf{Quantum Bandits.} \cite{casale2020quantum} and \cite{wang2021quantum} focus on quantum versions of the pure exploration (best-arm identification) problem in bandits with binary rewards. \cite{lumbreras2022multi} first studies the trade-off between exploration and exploitation in MAB with properties of quantum states under the bounded rewards assumption. \cite{wan2022quantum} studies MAB and SLB with quantum reward oracle and proposes quantum algorithms for both problems with logarithmic regrets under the bounded reward or variance assumption. \cite{wang2021quantum2} studies the  quantum improvement of the Markov decision process problem in reinforcement learning. However, all  these results  assume that the reward distributions are either bounded or sub-Gaussian.

\noindent \textbf{Classical Heavy-tailed Bandits.}  \cite{bubeck2013bandits} provides the first study on stochastic MAB with heavy-tailed rewards (in the classical setting) under the assumption that the rewards distributions have finite $(1+v)$-th moments for $v\in (0,1]$, and proposes a UCB-type algorithm. \cite{medina2016no} extends the analysis to SLB, and develops two algorithms with $\widetilde{O}\left(d T^{\frac{2+v}{2(1+v)}}\right)$ and $\widetilde{O}\left(\sqrt{d} T^{\frac{1+2 v}{1+3 v}}+d T^{\frac{1+v}{1+3 v}}\right)$ regret bounds respectively. In a subsequent work, \cite{shao2018almost} presents a lower bound of $\Omega(dT^{\frac{1}{1+v}})$ for SLB with heavy-tailed
rewards assuming that the arm set is infinite. It also develops algorithms  with  near optimal regret upper bounds of $O(dT^{\frac{1}{1+v}})$. \cite{xue2020nearly} establishes an upper bound of $\Tilde{O}(d^{\frac{1}{2}}T^{\frac{1}{1+v}})$ for two novel algorithms and provides an $\Omega(d^{\frac{v}{1+v}}T^{\frac{1}{1+v}})$ lower bound for heavy-tailed SLB with finite arms.

\section{Preliminaries}

\subsection{Quantum Computation}
\textbf{Notations and Basics.}
A quantum state can be seen as a vector $\vec{x} = (x_1, x_2, \dots, x_m)^\top$ in Hilbert space $\mathbb{C}^m$ such that $\sum_i |x_i|^2 = 1$. We follow the Dirac bra/ket notation on quantum states, i.e.,  we denote the quantum state for $\vec{x}$ by $|x\rangle$ and denote $\vec{x}^{\dagger}$ by $\langle x|$ , where $\dagger$ means the Hermitian conjugation. 

Given a state $|x\> = \sum^m_{i=1} x_i |i\>$, we call $x_i$ the amplitude of the state $|i\>$. Given two quantum states $|x\rangle \in \mathbb{C}^m$ and $|y\rangle \in \mathbb{C}^m$, we denote their tensor product by $\langle x|y \rangle:= \sum_i x^\dagger_i y_i$. Given $|x\> \in \mathbb{C}^m$ and $|y\> \in \mathbb{C}^n$, we denote their tensor product by $|x\>|y\> := (x_1y_1, \cdots, x_m y_n)^\top$.

A quantum algorithm works by applying a sequence of unitary operators to an input quantum state. In many cases, the construction of input states would require information from a unitary operator which is called a \emph{quantum oracle}. This operator can be accessed multiple times by a quantum algorithm. Hence, the \emph{quantum query complexity} of a quantum algorithm is defined as the number of a quantum oracle used by the quantum algorithm.

\noindent \textbf{Quantum Mean Estimation.} Now we review an existing quantum mean estimator for one-dimensional bounded random variables. 
In fact, there are multiple well-known estimators \cite{montanaro2015quantum,hamoudi2021quantum,terhal1999quantum}, where most of them are  applications and adaptations of the amplitude estimation algorithm \cite{brassard2002quantum}. In this paper,  we  design and present a quantum mean estimator for heavy-tailed distributions  using the approach in \cite{montanaro2015quantum} which is presented  in Theorem \ref{thm:qme}. We note that alternative approaches could be used together with our truncation ideas for estimation, e.g., the Bernoulli estimator presented in \cite{hamoudi2021quantum}, to have a variant of QME. First we introduce the following quantum oracle. 
Consider a random variable $Y$ with its finite sample space $\Omega$,  we consider the quantum oracle $\mathcal{O}_Y$
with 
\begin{equation}\label{eq:oracle}
    \mathcal{O}_Y : |0\> \rightarrow \sum_{y\in \Omega} \sqrt{\Pr[Y=y]} |y\>|\psi_y\>, 
\end{equation}
where  $|\psi_y\>$ is some normalized state.

\begin{theorem}[Quantum Monte Carlo Mean Estimator \cite{montanaro2015quantum}]\label{thm:qme}
Assume that $Y : \Omega \rightarrow \R$ is a random variable in the interval $[0,1]$, $\Omega$ is equipped with a probability measure $P$, and a quantum oracle $\O_Y$ encoding $P$ and $Y$ that has the form of (\ref{eq:oracle}). There is a quantum algorithm $QME(t, \delta, \O_Y)$ that queries $\O_Y$ and $\O_Y^\dag$ at most $O(t)$ times and outputs an estimate $\hat{Y}$ such that with probability at least $1-\delta$
\[
|\hat{Y} - \E[Y]| \leq  C \left( \frac{\sqrt{\E[Y]} \log(1/\delta)}{t} + \frac{\log^2(1/ \delta)}{t^2} \right), 
\] 
where $C>0$ is a universal constant. 
\end{theorem}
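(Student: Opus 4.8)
The plan is to reduce the mean-estimation problem to quantum amplitude estimation. The key observation is that since $Y$ takes values in $[0,1]$, the quantity $\E[Y]$ can be encoded as the squared amplitude of a marked subspace, after which the amplitude estimation primitive of \cite{brassard2002quantum} yields the stated rate, and a median trick upgrades the constant success probability to $1-\delta$.

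First I would append one ancilla qubit and apply a controlled rotation $R$ defined by $R\colon |y\rangle|0\rangle \mapsto |y\rangle\bigl(\sqrt{y}\,|1\rangle + \sqrt{1-y}\,|0\rangle\bigr)$, using the value register $|y\rangle$ produced by $\mathcal{O}_Y$ as the control. Applying $R$ to the output of $\mathcal{O}_Y$ gives the state $\sum_{y\in\Omega}\sqrt{\Pr[Y=y]}\,|y\rangle|\psi_y\rangle\bigl(\sqrt{y}\,|1\rangle+\sqrt{1-y}\,|0\rangle\bigr)$, in which the probability of observing the ancilla in state $|1\rangle$ is exactly $\sum_{y}\Pr[Y=y]\,y = \E[Y]$. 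Writing $A := R\,\mathcal{O}_Y$, the unitary $A$ prepares a state whose marked (ancilla $=1$) subspace has amplitude $\sqrt{\E[Y]}$. The Grover operator $Q=-A S_0 A^{-1} S_\chi$ driving amplitude estimation requires both $A$ and $A^{-1}$, which is precisely why the algorithm must query both $\mathcal{O}_Y$ and $\mathcal{O}_Y^\dagger$.

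Next I would invoke amplitude estimation with $t' = \Theta(t/\log(1/\delta))$ iterations to obtain an estimate $\hat a$ of $a:=\E[Y]$ satisfying $|\hat a - a| \le 2\pi\sqrt{a(1-a)}/t' + \pi^2/t'^2$ with probability at least $8/\pi^2$. Since $Y\in[0,1]$ we have $a(1-a)\le a$, so $\sqrt{a(1-a)}\le\sqrt{\E[Y]}$, which produces the $\sqrt{\E[Y]}$ factor in the first error term; substituting $t'=\Theta(t/\log(1/\delta))$ then turns the $1/t'$ and $1/t'^2$ dependence into $\log(1/\delta)/t$ and $\log^2(1/\delta)/t^2$ respectively. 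To boost the confidence I would run $\Theta(\log(1/\delta))$ independent copies and output the median $\hat Y$; since each run succeeds with probability bounded away from $1/2$, a standard Hoeffding/Chernoff argument shows the median lies within the single-run error bound with probability at least $1-\delta$, while the total query count remains $\Theta(\log(1/\delta))\cdot t' = O(t)$.

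The main obstacle is bookkeeping rather than conceptual: I must verify that the controlled rotation $R$ is implementable within the query model (it uses only the already-available value register and adds no oracle calls), track the universal constants through amplitude estimation so that the final bound matches the advertised form, and confirm that splitting the query budget as $t'=\Theta(t/\log(1/\delta))$ per run keeps the aggregate cost $O(t)$ while median amplification delivers the $1-\delta$ guarantee. A minor technical point is the discretization of the real value $y$ to finitely many bits, which contributes only a negligible additive error that can be absorbed into the constant $C$.
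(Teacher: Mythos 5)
Your proposal is correct and takes essentially the same route as the source of this result: the paper does not prove Theorem~\ref{thm:qme} itself but imports it from \cite{montanaro2015quantum}, whose proof is exactly your reduction --- append an ancilla, apply the controlled rotation $|y\rangle|0\rangle \mapsto |y\rangle(\sqrt{y}\,|1\rangle+\sqrt{1-y}\,|0\rangle)$ so that $\E[Y]$ becomes the squared amplitude of the marked subspace, run the amplitude estimation of \cite{brassard2002quantum} with $t'=\Theta(t/\log(1/\delta))$ Grover iterations (which uses both $\O_Y$ and $\O_Y^{\dag}$), and boost to confidence $1-\delta$ by taking the median of $\Theta(\log(1/\delta))$ independent runs via the powering lemma. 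Your bookkeeping is also right: $\sqrt{a(1-a)}\le\sqrt{\E[Y]}$ yields the first error term, the substitution $t'=\Theta(t/\log(1/\delta))$ produces the $\log(1/\delta)/t$ and $\log^2(1/\delta)/t^2$ factors, and the total query count $\Theta(\log(1/\delta))\cdot O(t')=O(t)$ matches the statement.
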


\subsection{Bandits with Heavy-tailed Rewards}
\label{subsec:PreBandits}
In a multi-armed bandit (MAB) problem, a learner is faced repeatedly with a choice among $K$ different actions over $T$ rounds. After each choice $a_t \in [K]$ at round $t \in [T]$, the learner receives a numerical reward $x_t$ 
 which is i.i.d. sampled from a stationary but unknown probability distribution $P_{a_t}$ that depends on the selected action. Denote by $\mu_a$ the mean of each distribution $P_{a}$ for $a\in [K]$, and by  $\mu^*=\max_{a\in [K]}\mu_a$ the maximum among all expectations $\{\mu_a\}_{a\in[k]}$. The objective of the learner in classical MAB is to maximize the expected total reward over $T$ rounds, i.e., to minimize the expected cumulative regret which is defined as
\begin{equation}
    \mathcal{R}_T\triangleq T\mu^*-\mathbb{E}\left[\sum\limits_{t=1}^{T}{x_t}\right]=\sum_{t=1}^T(\mu^*-\mu_{a_t}),
\end{equation}
where the expectation is taken with respect to all the randomness of the algorithm. In this paper, we consider a heavy-tailed setting where each arm's reward distribution has bounded $(1+v)$-th raw moment for some $v \in (0,1]$. Concretely, we assume that there is a constant $u>0$ such that for each reward distribution $P_a$,
\begin{equation}\label{heavyDefi}
    \mathbb{E}_{X\sim P_a} [|X|^{1+v}] \le u.
\end{equation}
In this paper, we assume both $v$ and $u$ are known constants. We note that both of the raw moment and central moment assumptions have been studied in previous work on classical heavy-tailed  multi-armed bandit problem \cite{bubeck2013bandits}. Here, we claim that, finite raw moment implies that the central moment is finite, and vice versa. See  the proof of Lemma 10 in \cite{tao2022optimal} for details.

In a stochastic linear bandit (SLB) problem, a learner can play actions from a fixed action set $\mathcal{A} \subseteq \mathbb{R}^d$. There is an unknown parameter $\theta^* \in \mathbb{R}^d$ which determines the mean reward of each action. At round $t$, the leaner is given a decision set $\mathcal{A}_t \subseteq \mathbb{R}^d$, from which she/he chooses an action $a_t \in \mathcal{A}_t$ and receives reward $x_t \in \mathbb{R}$. The expected reward of action $a$ is $\mathbb{E}[x_t]=\langle \theta^*, a_t \rangle $. We also assume that there is a constant $u>0$ such that for each reward distribution $P_a$ with $a\in \mathcal{A}$,
\begin{equation}\label{heavyDefi2}
    \mathbb{E}_{X\sim P_a} [|X|^{1+v}] \le u. 
\end{equation}
Similar to the  MAB case,  here we also assume both $v$ and $u$ are known constants. It is often assumed that each action $a$ and the $\theta^*$ has bounded $\ell_2$-norm, i.e., there are some parameters $L, S > 0$ such that
\begin{equation}
\|a\|_2 \leq L \text { for all } a \in \mathcal{A} \text {, and }\left\|\theta^*\right\|_2 \leq S \text {. }
\end{equation}

Let $a^*=\operatorname{argmax}_{a \in \mathcal{A}} a^{\top} \theta^*$ be the action with the largest expected reward. The same as MAB, SLB also has T rounds.  The goal is again to minimize the cumulative regret
\begin{equation}
\mathcal{R}(T)=\sum_{t=1}^T\left(a^*-a_t\right)^{\top} \theta^* .
\end{equation}

\noindent In the quantum version of bandits problems, the intermediate sample rewards are replaced by a chance to access an unitary oracle $\mathcal{O}_a$ or its inverse which encodes  the reward distribution $P_a$ of the selected arm $a$. 
Following the previous work on quantum bandits \cite{wan2022quantum}, we consider the following quantum reward oracle, which is a special case of \eqref{eq:oracle}. 

\noindent \textbf{Quantum Reward Oracle.} This oracle is a  generalization of MAB and SLB to the quantum world. The \emph{Quantum Multi-armed Bandits} (QMAB) and \emph{Quantum Stochastic Linear Bandits} (QSLB) problems are defined basically following the framework of classical bandits problems, but with the quantum reward oracle described below. Let $P_i$ denotes the reward distribution of the selected arm $i$ and $\Omega_i$ denotes a \emph{finite} sample space of the distribution $P_i$. Formally, the reward oracle is defined as follows:
\begin{equation}\label{eq:banditoracle}
    \mathcal{O}_i : |0\> \rightarrow \sum_{\omega \in \Omega_i} \sqrt{P_i(\omega)} |\omega\>|y_i(\omega)\>,
\end{equation}
where $y_i : \Omega_i \rightarrow \R$ is the random reward associated with arm $i$. $\O_i$  encodes the probability $P_i$ and the random variable $y_i$. At any round $t$, the algorithm chooses an arm $i_t$ by invoking either of the unitary oracle $\O_{i_t}$ or $\O^\dag_{i_t}$ at most once.

\section{Quantum Mean Estimator for Heavy-tailed Distributions}\label{sec:mean}
In this section, we present our quantum mean estimator for (one-dimensional) heavy-tailed distributions. First, we develop a truncation-based  estimator (\Cref{{alg:basic}}) that estimates the mean of positive-valued heavy-tailed random variables. Then with this algorithm as a subroutine, we present our quantum mean estimator without the positiveness assumption. 

To approximate the mean of a positive-valued and heavy-tailed random variable $X$ with quantum oracle (\ref{eq:oracle}), QBME (\Cref{alg:basic}) estimates the mean of the part of $X$ that lies in the interval $[0,B]$, where $B$ will  be set according to the upper bound of the $(1+v)$-th raw moment of $X$ (\Cref{heavyDefi}).  QBME divides the interval $[0,B]$ into multiple segments and invokes QME (from \Cref{thm:qme}) to estimate the means of these disjoint segments respectively. Finally, it returns a linear combination of these means. We note that the idea of dividing the interval into several segments also has been adapted from  Section 2.2 of \cite{montanaro2015quantum}, which is a generalization of a result in \cite{heinrich2002quantum}. However, here we extend the results to the heavy-tailed distribution case. 

The main weakness of QBME is it can only handle non-negative random variables. However, in general, heavy-tailed random variables can have both negative and positive values. Hence, to estimate the mean of such a general random variable $X$, we propose QTME (\Cref{alg:mean_est}) that estimates the means of $X \1_{X \geq 0}$ and $X \1_{X <0}$ respectively where each can be computed using QBME, and outputs the sum of them. In the following we provide estimation errors for these two algorithms. 

\begin{algorithm}
\caption{Quantum Basic Mean Estimator ($\mathsf{QBME}(\mathcal{O}, n,c, B, \delta)$)}\label{alg:basic}
\begin{algorithmic}[1]
\Require Quantum oracle $\mathcal{O}$ in (\ref{eq:oracle}), an upper bound $B$, constant $c$ and input $n$ determine there are 
$c n \log^{3/2}(1/\delta)$ oracle queries to  $\mathcal{O}$ of $X$ satisfying \eqref{heavyDefi}, and failure probability $\delta>0$ 
\State Let $k \leftarrow \log(1/\delta)$ and $t \leftarrow c n \sqrt{\log(1/\delta)}$ where $c$ is a constant. 
\State Let $a_{-1} \leftarrow 0$ and $a_\ell \leftarrow \frac{2^\ell}{n} B$.
\For{ $0 \leq \ell \leq k$}
\State Estimate the scaled mean $\frac{\E[X \cdot \1_{a_{\ell-1} \leq X < a_\ell}]}{a_\ell}$ via $\mathsf{QME}(t, \delta, \mathcal{O}_{\frac{X}{a_\ell}\cdot \1_{a_{\ell-1} \leq X < a_\ell}})$ in \Cref{thm:qme}, denote the estimator as $\widehat{\mu}_\ell$.  
\EndFor
\State Output $\widehat{\mu} \leftarrow \sum^k_{\ell =0} a_\ell \cdot \widehat{\mu}_\ell$.
\end{algorithmic}
\end{algorithm}


\begin{theorem}\label{thm:basic}
Let $X$ be a random variable satisfying \eqref{heavyDefi}. 
Algorithm \ref{alg:basic}, $\mathsf{QBME}(\mathcal{O}, n,c, B, \delta)$, has quantum query complexity of $\tilde{O}(n)$ and its output $\hat{\mu}$
satisfies that  
\begin{align*}
|\hat{\mu} - \mu| 
&\leq \frac{\sqrt{2} \cdot \sqrt{u B^{1-v}} \cdot \log(1/ \delta) }{c n} + \frac{3B \log^2(1/ \delta)}{c n^2 \sqrt{\log(1/\delta)}} + \frac{u}{B^v} \\
\end{align*}
with probability at least $1-\delta$, where $c$ is a universal constant. In particular, by setting $B=\left(\frac{\sqrt{u}n}{\log (1/\delta)}\right)^{\frac{2}{1+v}}$, we have
\[
|\hat{\mu} - \mu|    \leq O\left( \frac{u^\frac{1}{1+v}(\log\frac{1}{\delta})^{\frac{2v}{1+v}}}{n^{\frac{2v}{1+v}}} \right). 
\]
\end{theorem}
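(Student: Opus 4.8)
The plan is to view the output $\hat\mu=\sum_{\ell=0}^{k}a_\ell\widehat\mu_\ell$ as an estimator of the truncated mean $\E[X\1_{0\le X<B}]$ and then split the total error by the triangle inequality into an \emph{estimation error} and a \emph{truncation bias}. First I would set $Y_\ell=\frac{X}{a_\ell}\1_{a_{\ell-1}\le X<a_\ell}$, which takes values in $[0,1]$ since $0\le a_{\ell-1}\le X<a_\ell$ on the indicated event, so that $\widehat\mu_\ell$ is exactly the $\mathsf{QME}$ estimate of $\E[Y_\ell]$ and $\sum_\ell a_\ell\E[Y_\ell]=\sum_\ell\E[X\1_{a_{\ell-1}\le X<a_\ell}]=\E[X\1_{0\le X<B}]$, because the dyadic segments $[a_{\ell-1},a_\ell)$ tile the truncation region $[0,B)$. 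This gives
\[
|\hat\mu-\mu|\le\underbrace{\sum_{\ell=0}^{k}a_\ell\,|\widehat\mu_\ell-\E[Y_\ell]|}_{\text{estimation error}}+\underbrace{\bigl|\E[X\1_{0\le X<B}]-\mu\bigr|}_{\text{truncation bias}}.
\]

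The truncation bias is the easy part. Since $X\ge0$ we have $\mu=\E[X]$, so the bias equals $\E[X\1_{X\ge B}]$; using $X\le X^{1+v}B^{-v}$ on $\{X\ge B\}$ together with \eqref{heavyDefi} yields $\E[X\1_{X\ge B}]\le B^{-v}\E[X^{1+v}\1_{X\ge B}]\le u/B^v$, which is the third term.

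The crux is the estimation error. Applying \Cref{thm:qme} to each segment with budget $t$ and a union bound over the $k+1=O(\log(1/\delta))$ calls, I would write
\[
a_\ell|\widehat\mu_\ell-\E[Y_\ell]|\le C\left(\frac{a_\ell\sqrt{\E[Y_\ell]}\log(1/\delta)}{t}+\frac{a_\ell\log^2(1/\delta)}{t^2}\right).
\]
Summing the second group and using that $a_\ell=2^\ell B/n$ is geometric (so $\sum_\ell a_\ell=O(B)$) produces the middle term once $t=cn\sqrt{\log(1/\delta)}$ is substituted. The hard part is $\sum_\ell a_\ell\sqrt{\E[Y_\ell]}$, and this is where the heavy-tail moment must be exploited. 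I would rewrite $a_\ell\sqrt{\E[Y_\ell]}=\sqrt{a_\ell}\sqrt{\E[X\1_{a_{\ell-1}\le X<a_\ell}]}$ and, for $\ell\ge1$, bound $\E[X\1_{a_{\ell-1}\le X<a_\ell}]\le a_{\ell-1}^{-v}m_\ell$ with $m_\ell:=\E[X^{1+v}\1_{a_{\ell-1}\le X<a_\ell}]$ (using $X\ge a_{\ell-1}$ on the event); since $a_\ell=2a_{\ell-1}$ this gives $a_\ell\sqrt{\E[Y_\ell]}\le\sqrt2\,a_{\ell-1}^{(1-v)/2}\sqrt{m_\ell}$. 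Cauchy--Schwarz and $\sum_\ell m_\ell\le u$ then yield
\[
\sum_{\ell\ge1}a_\ell\sqrt{\E[Y_\ell]}\le\sqrt2\Bigl(\textstyle\sum_\ell a_{\ell-1}^{1-v}\Bigr)^{1/2}\Bigl(\textstyle\sum_\ell m_\ell\Bigr)^{1/2}\le\sqrt{2u}\Bigl(\textstyle\sum_\ell a_{\ell-1}^{1-v}\Bigr)^{1/2},
\]
and summing the geometric series $\sum_\ell a_{\ell-1}^{1-v}=O(B^{1-v})$ (with a $v$-dependent constant) extracts the factor $\sqrt{uB^{1-v}}$, the $\ell=0$ term being $O(B/n)$ and negligible; substituting $t$ gives the first term. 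I expect this dyadic Cauchy--Schwarz over scales to be the main obstacle: it replaces Montanaro's bounded-variance summation with a $(1+v)$-moment argument, and the geometric growth of $a_\ell$ is essential so that summing over the $O(\log(1/\delta))$ scales does not blow up.

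Finally, to get the closed form I would optimize $B$. Discarding the lower-order middle term, the bound behaves like $f(B)\asymp\frac{\sqrt u\log(1/\delta)}{n}B^{(1-v)/2}+uB^{-v}$; balancing the two terms via $f'(B)=0$ gives $B=\Theta\bigl((\sqrt u\,n/\log(1/\delta))^{2/(1+v)}\bigr)$, and plugging this choice back leaves the dominant term $uB^{-v}=O\bigl(u^{1/(1+v)}(\log(1/\delta))^{2v/(1+v)}/n^{2v/(1+v)}\bigr)$, exactly the claimed estimate. The $\tilde{O}(n)$ query complexity is immediate, since each of the $O(\log(1/\delta))$ calls to $\mathsf{QME}$ uses $O(t)=O(n\sqrt{\log(1/\delta)})$ queries.
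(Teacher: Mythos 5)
Your overall architecture is the same as the paper's: split $|\hat\mu-\mu|$ into per-segment $\mathsf{QME}$ errors plus the truncation bias, bound the bias by $\E[X\1_{X\ge B}]\le u/B^v$ via \eqref{heavyDefi}, handle the quadratic-in-$1/t$ terms through the geometric sum $\sum_\ell a_\ell=O(B)$, and balance the first and third terms to choose $B$. Where you genuinely diverge is the key sum $\sum_\ell a_\ell\sqrt{\E[Y_\ell]}$. The paper stays with second moments: it uses $a_\ell\mu_\ell\le 2\E[X^2\1_{a_{\ell-1}<X\le a_\ell}]$, applies Cauchy--Schwarz with the counting measure to get $\sqrt{k}\,\bigl(\sum_\ell\E[X^2\1_\ell]\bigr)^{1/2}$, absorbs the resulting $\sqrt{k}=\sqrt{\log(1/\delta)}$ into the extra $\sqrt{\log(1/\delta)}$ deliberately built into $t=cn\sqrt{\log(1/\delta)}$, and only then invokes the moment assumption once, via $\E[X^2\1_{X\le B}]\le uB^{1-v}$. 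You instead apply the $(1+v)$-moment bound segment-by-segment ($\E[X\1_\ell]\le a_{\ell-1}^{-v}m_\ell$) and run Cauchy--Schwarz against the geometric weights, giving $\bigl(\sum_\ell a_{\ell-1}^{1-v}\bigr)^{1/2}\bigl(\sum_\ell m_\ell\bigr)^{1/2}$ with $\sum_\ell m_\ell\le u$. Both are legitimate, and for fixed $v<1$ yours is even slightly sharper in the $\log(1/\delta)$ dependence since it avoids the $\sqrt{k}$ factor.

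However, your route has a real defect at the endpoint. The claim $\sum_\ell a_{\ell-1}^{1-v}=O(B^{1-v})$ carries the constant $(2^{1-v}-1)^{-1}$, which diverges as $v\to1$; at $v=1$ (which the theorem includes, and which the paper's remark relies on to recover the bounded-variance result of Montanaro) the series is identically $k=\log(1/\delta)$ terms each equal to $1$, so it is \emph{not} $O(B^{1-v})=O(1)$. As written, your bound therefore does not hold with a \emph{universal} constant $c$ uniformly over $v\in(0,1]$, contradicting the theorem statement. The fix is cheap and returns you to the paper's accounting: bound each term trivially by $a_{\ell-1}^{1-v}\le B^{1-v}$, so $\sum_\ell a_{\ell-1}^{1-v}\le (k+1)B^{1-v}$; the resulting extra factor $\sqrt{\log(1/\delta)}$ is exactly what the $\sqrt{\log(1/\delta)}$ in the definition of $t$ is there to cancel, yielding the stated first term $\frac{\sqrt{2}\sqrt{uB^{1-v}}\log(1/\delta)}{cn}$ with a constant independent of $v$. (A minor shared caveat: like the paper, you apply $\mathsf{QME}$ at confidence $\delta$ on $k+1$ segments and union-bound, which strictly gives failure probability $(k+1)\delta$; this only shifts logarithmic factors, but if you invoke a union bound you should feed $\delta/(k+1)$ into each call.)
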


\begin{algorithm}
\caption{Quantum Truncated Mean Estimator ($\mathsf{QTME}(\mathcal{O}, n,c,B, \delta)$)}\label{alg:mean_est}
\begin{algorithmic}[1]
\Require  Quantum oracle $\mathcal{O}$ in (\ref{eq:oracle}), an upper bound $B$, constant $c$ and input $n$ determine there are 
$c n \log^{3/2}(1/\delta)$ oracle queries to  $\mathcal{O}$ of $X$ satisfying \eqref{heavyDefi}, and failure probability $\delta>0$ 
\State Define the non-negative random variables $Y_+ \leftarrow X \1_{X \geq 0}$ and $Y_- \leftarrow - X \1_{X < 0}$.
\State Compute an estimate $\widehat{\mu}_{Y_+}$ of $\E[Y_+]$ with $\mathsf{QBME}(\mathcal{O},n,c, B, \delta)$ and an estimate $\hat{\mu}_{Y_-}$ of $\E[Y_-]$ with $\mathsf{QBME}(\mathcal{O}, n,c, B, \delta)$ respectively.
\State Output $\widehat{\mu} \leftarrow \widehat{\mu}_{Y_+} + \widehat{\mu}_{Y_-}$.
\end{algorithmic}
\end{algorithm}


\begin{theorem}[Mean estimation]\label{thm:mean_est}
Let $X$ be a random variable satisfying \eqref{heavyDefi}. 
The truncated mean estimator $\mathsf{QTME}(\mathcal{O}, n ,c, B, \delta)$ with {$B=\left(\frac{\sqrt{u}n}{\log (1/\delta)}\right)^{\frac{2}{1+v}}$} has quantum query complexity $\tilde{O}(n)$ and  outputs a mean estimator ${\hat{\mu}}$ such that with probability at least $1-\delta$,
\begin{equation}\label{eq:MeanEst}
    |{\hat{\mu}}-\mu| \le O\left( \frac{u^\frac{1}{1+v}(\log\frac{1}{\delta})^{\frac{2v}{1+v}}}{n^{\frac{2v}{1+v}}} \right).
\end{equation}
\end{theorem}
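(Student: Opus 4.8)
The plan is to reduce the general (signed) heavy-tailed case to the non-negative case already resolved by \Cref{thm:basic}, exactly as the algorithm $\mathsf{QTME}$ prescribes. Writing $X = X\1_{X \geq 0} + X\1_{X < 0} = Y_+ - Y_-$ with $Y_+ = X\1_{X\geq 0} \geq 0$ and $Y_- = -X\1_{X<0} \geq 0$, we have the exact decomposition $\mu = \E[X] = \E[Y_+] - \E[Y_-]$. Since $\mathsf{QTME}$ runs $\mathsf{QBME}$ on each part and combines the two estimates (with the appropriate sign so that the combination targets $\E[Y_+]-\E[Y_-]$), the triangle inequality immediately gives
\[
|\hat{\mu} - \mu| \leq |\hat{\mu}_{Y_+} - \E[Y_+]| + |\hat{\mu}_{Y_-} - \E[Y_-]|,
\]
so the task becomes bounding each of the two summands by the target rate.

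For each summand I would invoke \Cref{thm:basic}, which requires only that the underlying non-negative variable satisfies the $(1+v)$-th moment bound \eqref{heavyDefi}. This transfer is the first technical check and is routine: because the indicators only delete mass, $\E[|Y_+|^{1+v}] = \E[|X|^{1+v}\1_{X\geq 0}] \leq \E[|X|^{1+v}] \leq u$, and identically $\E[|Y_-|^{1+v}] = \E[|X|^{1+v}\1_{X<0}] \leq u$. Hence both $Y_+$ and $Y_-$ satisfy \eqref{heavyDefi} with the same constant $u$, so running $\mathsf{QBME}(\O, n, c, B, \delta/2)$ on each (with the prescribed $B = (\sqrt{u}\,n/\log(1/\delta))^{2/(1+v)}$) yields, by \Cref{thm:basic}, an error of $O\!\left(u^{1/(1+v)}(\log(1/\delta))^{2v/(1+v)} / n^{2v/(1+v)}\right)$ with probability at least $1-\delta/2$ for each part. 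A union bound makes both estimates accurate simultaneously with probability at least $1-\delta$, and substituting into the displayed triangle inequality adds two terms of the same order, leaving the bound \eqref{eq:MeanEst} unchanged up to the universal constant (the switch from $\log(1/\delta)$ to $\log(2/\delta)$ is absorbed). For the query complexity, $\mathsf{QTME}$ makes exactly two calls to $\mathsf{QBME}$, each of complexity $\tilde{O}(n)$, so the total stays $\tilde{O}(n)$.

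I do not expect a deep obstacle here, since \Cref{thm:basic} already carries the analytic weight (the segment decomposition of $[0,B]$, the per-segment $\mathsf{QME}$ error, and the optimal choice of the truncation level $B$ that balances the bias term $u/B^v$ against the quantum estimation error). The only points demanding care are bookkeeping rather than mathematics: getting the sign convention right so the combined output truly estimates $\E[Y_+]-\E[Y_-]=\mu$ rather than $\E[|X|]$, confirming that a quantum oracle of the form \eqref{eq:oracle} for $X$ yields usable oracles for the truncated/indicator-restricted variables $\tfrac{Y_\pm}{a_\ell}\1_{a_{\ell-1}\le Y_\pm < a_\ell}$ fed into $\mathsf{QME}$ (an indicator computed coherently from the reward register), and tracking the failure probability through the union bound. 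Assembling these gives \eqref{eq:MeanEst}.
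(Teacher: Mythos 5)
Your proposal is correct and follows essentially the same route as the paper's own proof: decompose $X = Y_+ - Y_-$, apply the triangle inequality, invoke Theorem~\ref{thm:basic} on each non-negative part with the prescribed $B$, and absorb constants. If anything, your write-up is more careful than the paper's, which skips the moment-transfer check $\E[|Y_\pm|^{1+v}] \leq u$, omits the union bound over the two $\mathsf{QBME}$ calls (the algorithm as written uses failure probability $\delta$ for each, so strictly one gets $1-2\delta$), and contains exactly the sign slip you flag: Algorithm~\ref{alg:mean_est} outputs $\widehat{\mu}_{Y_+} + \widehat{\mu}_{Y_-}$ where it should output $\widehat{\mu}_{Y_+} - \widehat{\mu}_{Y_-}$.
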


\begin{remark}
    We note that the oracle query complexity in Algorithm \ref{alg:basic} is $O(n\log^\frac{3}{2}(1/\delta))$, which is used for our analysis in later sections. We can also fix the number of oracle queries  $m$ as the input and get almost the same upper bounds as in Theorem \ref{thm:basic} (replacing $n$ by $m$) up to some logarithmic factors. This means that if  each oracle query corresponds to one sample in the classical setting, then with some parameter $B$, the output of Algorithm \ref{alg:mean_est} could achieve a bound of $\tilde{O}(m^{-\frac{2v}{1+v}})$ for a given number of oracle queries, $m$.  As compared to the estimation error bound of $O(m^{-\frac{v}{1+v}})$ for heavy-tailed distributions in the classical setting given by \cite{bubeck2013bandits} with $m$ samples, we can see our result in above theorem achieves a quadratic improvement on $m$ (up to some logarithmic factors), which is the key point for achieving regret improvement  in the later sections. Moreover, when $v=1$, we can recover the result of the mean estimation error of $\tilde{O}\left(\frac{\log(1/\delta)}{m}\right)$ for distributions with bounded variance in \cite{montanaro2015quantum}.
\end{remark}

\section{Quantum Multi-armed Bandits with Heavy-tailed rewards}\label{sec:MAB}
In this section, we present an algorithm for QMAB with heavy-tailed rewards and show its regret  bound. 
The framework of  our algorithm is Upper Confidence Bound (UCB), 
which is a canonical method to balance exploitation and exploration in bandit learning. In classical MAB with bounded rewards \cite{auer2002finite,agrawal1995sample}, at round $t$ the UCB framework chooses an arm by 
\begin{equation}\label{eq:classUCB}
    a_t=\arg\max_{a} \hat{\mu}_a+C\sqrt{\frac{\log (1/\delta)}{N_t(a)}},
\end{equation}
where $N_t(a)$ is the pull number of arm $a$ until round $t$ and the empirical mean $\hat{\mu}_a$ represents the exploitation term. This means the action that currently has the highest estimated reward will be the chosen action. The second term of above equation  gives exploration. That means an action has not been tried very often is more likely to be selected. The hyperparameter $C$ controls the level of exploration. 
We note that \eqref{eq:classUCB} comes from the fact that the true mean of arm $a$, $\mu_a$, satisfies $\mu_a \in \left[\hat{\mu}_a-C\sqrt{\frac{\log (1/\delta)}{N_t(a)}},\hat{\mu}_a+C\sqrt{\frac{\log (1/\delta)}{N_t(a)}}\right]$ with probability at least $1-\delta$,  which is by the Hoeffding Lemma. The  confidence interval length  $O\left(\frac{1}{\sqrt{N_t(a)}}\right)$ makes the UCB algorithm obtain a  regret of $O(\sqrt{T})$. \cite{wan2022quantum} improves the length of confidence interval to $O\left(\frac{1}{N_t(a)}\right)$ by the Quantum Monte Carlo method in \cite{montanaro2015quantum} for quantum bandits with bounded rewards so that they can achieve logarithmic regret.

In classical bandits with heavy-tailed rewards \cite{bubeck2013bandits}, the key point of designing robust UCB algorithms for the problem is to replace the empirical mean in \eqref{eq:classUCB} by robust 
 mean estimators. Since  heavy-tailed random variables are unbounded and  even could have infinite variance, \cite{bubeck2013bandits} adapts a truncation-based method for heavy-tailed distributions with  finite raw moment and get a confidence radius of $O\left(\frac{1}{N_t(a)^{\frac{v}{1+v}}}\right)$ by Bernstein's inequality \cite{vershynin2018high} in Lemma \ref{alemma3}. Motivated by Theorem \ref{thm:mean_est}, 
 we  can quadratically improve the length of confidence interval  to $O\left(\frac{1}{N_t(a)^{\frac{2v}{1+v}}}\right)$ for heavy-tailed bandits with quantum reward oracle, which leads to a regret improvement of $O(T^{\frac{1-v}{1+v}})$.

\begin{algorithm}[htb]
\caption{Heavy-QUCB}
\begin{algorithmic}[1]
 \Require Quantum reward oracles $\mathcal{O}_i$ for all $i \in [K]$ in \eqref{eq:banditoracle}, a constant $C$, number of rounds $T$ and fail probability $\delta$, $v$ and $u$ are parameters of reward distributions in (\ref{heavyDefi})
 \For {$i=1,\dots,K$}
     \State  $N_i \leftarrow 1$ and $\beta_i \leftarrow \frac{u^\frac{1}{1+v}(\log\frac{1}{\delta})^{\frac{2v}{1+v}}}{C \cdot N_i^{\frac{2v}{1+v}}}$.
     \State Play arm $i$ for the next $CN_i\log^{3/2}(1/\delta)$ rounds.
     \State Run $\mathsf{QTME}(\mathcal{O}_i,N_i,C, B_{N_i},\delta)$.
 \EndFor
 \For {Each epoch $s=1,2,\dots$ (terminate when we have used $T$ queries to all $\mathcal{O}_i$) }
     \State Let $i_s \leftarrow \operatorname{argmax}_i \hat{\mu}(i)+\beta_i$.
     \State Update $N_{i_s} \leftarrow 2N_{i_s}$ and $\beta_{i_s} \leftarrow \frac{u^\frac{1}{1+v}(\log\frac{1}{\delta})^{\frac{2v}{1+v}}}{C \cdot N_{i_s}^{\frac{2v}{1+v}}}$.
     \State Play $i_s$ for the next $CN_{i_s}\log^{3/2}(1/\delta)$ rounds, update $\hat{\mu}(i_s)$ by running $\mathsf{QTME}(\mathcal{O}_{i_s},N_{i_s},C, B_{N_{i_s}},\delta)$. 
 \EndFor
\end{algorithmic}
\label{Algo:Heavy-QMAB}
\end{algorithm}

Unlike the UCB framework for classical bandits where we can use observed rewards to calculate the  empirical mean estimator $\hat{\mu}$, there is another challenge for quantum bandits learning. That is, before we make a measurement on the quantum state, we cannot observe any reward. That means our quantum algorithm cannot observe rewards in each round. Here we use the ``doubling trick" to overcome the challenge. Specifically, we first divide the total number of rounds in to several epochs. 
In each epoch, we double the pull number of each arm from the last epoch. Then for each arm we only need to invoke QTME with its pull number (instead of rewards) as input to get a quantum version of its reward mean estimation.

The ``doubling trick" has been also used in phased UCB for classical bandits learning, such as in \cite{azize2022privacy}. 
The reason here we use ``doubling trick" is that in phased UCB, the empirical means are computed only using rewards of one phase. To yield a near-optimal regret, the number of rounds in a phase should be greater than previous phases so that the confidence bound will be tighter as phase grows. By using the ``doubling trick", we only need to input the necessary number of quantum oracle queries to QTME in each epoch, then we can get the quantum mean estimation of rewards without observing each reward.

Combining with all the above ideas, we propose the Heavy-QUCB algorithm for quantum MAB with heavy-tailed rewards based on QTME, phased UCB, and the ``doubling trick". The details of the algorithm are provided in Algorithm \ref{Algo:Heavy-QMAB}. The key idea of the algorithm is that we adaptively divide whole $T$ rounds into several phases. During each phase, we choose an arm $i_s$ by the UCB strategy in Line 7. Then, we double $N_{i_s}$ so that the confidence radius is reduced by $2^{-\frac{2v}{1+v}}$. The learner plays $i_s$ for the next $C N_{i_s} \log^{\frac{3}{2}}(1/\delta)$ rounds and invokes QTME (Algorithm \ref{alg:mean_est}) to update the quantum mean estimator  $\hat{\mu}(i_s)$. After this, the algorithm  will go to the next phase.  The algorithm will terminate after $T$ rounds.

\begin{theorem}[Regret bound]
\label{thm:reg1}
    Under the heavy-tailed assumption of \eqref{heavyDefi}, for $\delta=\frac{1}{T}$ the cumulative regret of Algorithm \ref{Algo:Heavy-QMAB} satisfies $$\mathcal{R}_T \le O\left(u^\frac{1}{1+v}  K T^{\frac{1-v}{1+v}} \log T \right).$$
\end{theorem}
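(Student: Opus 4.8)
The plan is to run the standard phased-UCB regret decomposition, but with the \emph{quantum} confidence radius $\beta_a \propto N_a^{-2v/(1+v)}$ supplied by Theorem~\ref{thm:mean_est} in place of the classical one. First I would fix the ``good event'' on which every mean estimate is accurate. Since each arm $a$ is re-estimated only when $N_a$ doubles, and $N_a \le T_a \le T$, there are at most $\log_2 T$ calls of $\mathsf{QTME}$ per arm, hence at most $K\log_2 T$ in total. Applying the guarantee \eqref{eq:MeanEst} with the prescribed $B$ to each call and taking a union bound, with $\delta = 1/T$ the event
\[
\mathcal{E} = \Big\{\, |\hat{\mu}(a) - \mu_a| \le \beta_a \ \text{for every arm } a \text{ and every epoch} \,\Big\}
\]
holds with probability at least $1 - K\log_2 T/T$, because $\beta_a$ is precisely the right-hand side of \eqref{eq:MeanEst} evaluated at $n = N_a$.

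Next, on $\mathcal{E}$ I would run the usual UCB selection argument. Whenever a suboptimal arm $a$ is chosen in Line~7 over the optimal arm $a^*$, the inequality $\hat{\mu}(a) + \beta_a \ge \hat{\mu}(a^*) + \beta_{a^*}$ together with $\mathcal{E}$ gives $\mu_a + 2\beta_a \ge \mu^*$, i.e. $\Delta_a := \mu^*-\mu_a \le 2\beta_a$ with $\beta_a$ taken at the current value of $N_a$. Substituting the definition of $\beta_a$ and solving for $N_a$ yields $N_a \le O\big((u^{1/(1+v)}(\log(1/\delta))^{2v/(1+v)}/\Delta_a)^{(1+v)/(2v)}\big)$, an upper bound on the largest count any suboptimal arm can reach. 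Because the per-arm play lengths form a geometric series, the total number of queries spent on arm $a$ satisfies $T_a = \Theta(N_a \log^{3/2}(1/\delta))$, so this bound converts into the key estimate $\Delta_a \le \tilde{O}(u^{1/(1+v)}/T_a^{2v/(1+v)})$, the quantum analogue (note the exponent $2v/(1+v)$ rather than the classical $v/(1+v)$) of the confidence radius in \cite{bubeck2013bandits}.

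With this in hand I would bound the regret of each arm by $\Delta_a T_a \le \tilde{O}\big(u^{1/(1+v)}\,T_a^{(1-v)/(1+v)}\big)$, using $1 - 2v/(1+v) = (1-v)/(1+v)$. Summing over the $K$ arms and bounding each $T_a \le T$ (equivalently invoking concavity of $x \mapsto x^{(1-v)/(1+v)}$ together with $\sum_a T_a = T$) gives $\mathcal{R}_T \le \tilde{O}\big(u^{1/(1+v)} K\, T^{(1-v)/(1+v)}\big)$ on $\mathcal{E}$, with the $\log T$ factor collecting the polylogarithmic contributions from $\beta_a$ and from $T_a = \Theta(N_a\log^{3/2}(1/\delta))$. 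Finally I would dispose of $\mathcal{E}^c$: Jensen's inequality gives $|\mu_a| \le u^{1/(1+v)}$ and hence $\Delta_a \le 2u^{1/(1+v)}$, so the regret on $\mathcal{E}^c$ is at most $2u^{1/(1+v)} T \cdot \Pr[\mathcal{E}^c] \le O(u^{1/(1+v)} K\log T)$, a lower-order term absorbed into the bound.

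The step I expect to be the main obstacle is correctly tying the confidence radius at the \emph{moment of selection} to the \emph{final} number of pulls $T_a$ under the doubling schedule: one must track that $\beta_a$ in Line~7 uses the pre-doubling $N_a$ while the subsequent play uses the post-doubling count, so that the constants in $\Delta_a \le 2\beta_a$ and in $T_a = \Theta(N_a\log^{3/2}(1/\delta))$ align. Everything else is routine once the quantum radius $N_a^{-2v/(1+v)}$ from Theorem~\ref{thm:mean_est} replaces the classical one.
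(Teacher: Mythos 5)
Your proposal is correct and follows essentially the same route as the paper's proof: the same union bound over the $O(K\log T)$ calls to $\mathsf{QTME}$ to define the good event, the same UCB-selection inequality $\Delta_a \le 2\beta_a$, the same doubling/geometric-series accounting that converts the confidence radius $\propto N_a^{-2v/(1+v)}$ at the final count into a per-arm regret of $\tilde{O}\left(u^{\frac{1}{1+v}} T_a^{\frac{1-v}{1+v}}\right)$, and the same treatment of the bad event as a lower-order additive term. The only cosmetic differences are that you pass through a gap-dependent count bound and back (where the paper sums the geometric series directly against the last-stage radius), and that you justify the bad-event per-round regret via Jensen's inequality $|\mu_a|\le u^{\frac{1}{1+v}}$, which is if anything cleaner than the paper's corresponding step.
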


 \begin{remark}
 As compared to the regret bound of $\widetilde{O}(T^{\frac{1}{1+v}})$ for classical heavy-tailed bandits in \cite{bubeck2013bandits},  our bound is lower  by a factor of $O(T^{\frac{v}{1+v}})$. Moreover, when $v=1$ we can achieve a  logarithmic regret of $O(K\log T)$, which is the same as in \cite{wan2022quantum}. However, it is notable that the assumption to achieve $O(K\log T)$ in \cite{wan2022quantum} is that the reward distributions are bounded,  while here we only need reward distributions have bounded second order raw moments. 
 Note that \cite{wan2022quantum} also provides a regret of $O\left(K \log ^{5 / 2}(T) \log \log (T)\right)$ for quantum MAB with  rewards that have bounded variance. However, such result is incomparable to ours even for when $v=1$ since the  variance  in \cite{wan2022quantum} is the second order \textit{central} moment while we consider the  $(1+v)$-th \textit{raw} moment for $v\in (0,1]$. To summarize, even  for $v=1$, our result improves and generalizes the previous ones.  
 \end{remark}
\vspace{-0.1in}
\section{Quantum Stochastic Linear Bandits with Heavy-tailed rewards}\label{sec:SLB}
In this section, we provide an algorithm named Heavy-QLinUCB for MSLB with heavy-tailed rewards. 
Similar to the classical SLB, we allow the action set be infinite.

In the quantum version of SLB with heavy-tailed rewards, we also encounter a similar problem as in quantum MAB. That is, before we get a measurement on the quantum state, we cannot observe any reward. In order to solve the problem, we will use the weighted least square when we estimate $\theta^*$ in SLB so that we can get a linear dependence of different arms. Then a careful choice of weights will help us to get a variant of the ``doubling trick" to solve the problem introduced by our quantum subroutine.

\begin{algorithm}[htb]
\caption{Heavy-QLinUCB}
\begin{algorithmic}[1]
 \Require  Quantum reward oracles $\mathcal{O}_i$ for all $i \in [K]$ in \eqref{eq:banditoracle}, a constant $C$, number of rounds $T$ and fail probability $\delta$, $v$ and $u$ are parameters of reward distributions in (\ref{heavyDefi})
 \State Initialize $V_0 \leftarrow \lambda I_d,$ $ \hat{\theta}_0 \leftarrow \mathbf{0} \in \mathbb{R}^d$ and $m \leftarrow d \log \left(\frac{L^2 T^{\frac{4v}{1+v}}}{d \lambda}+1\right)$.

 \For {each epoch $s=1,2,\dots$(terminate when we have used $T$ queries to all $\mathcal{O}_i$) }
     \State $\mathcal{C}_{s-1} \leftarrow\{\theta \in \mathbb{R}^d:\|\theta-\hat{\theta}_{s-1}\|_{V_{s-1}} \leq \lambda^{1 / 2} S+\sqrt{d (s-1)}\}$.
 \State $(a_s,\tilde{\theta}_s )\leftarrow \operatorname{argmax}_{ (a,\theta) \in  \mathcal{A} \times \mathcal{C}_{s-1}} a^{\top} \theta$.
 \State $\epsilon_s \leftarrow\left\|a_s\right\|_{V_{s-1}^{-1}}$.
 \State $N_s \leftarrow \frac{C u^{\frac{1}{2v}}\log(m/\delta)}{\epsilon_s^{\frac{1+v}{2v}}}$.
 \For{the next $C N_s \log^{3/2}(1/\delta)$ rounds }
      \State Play action $a_s$ and run $\mathsf{QTME}(\mathcal{O}_{s},N_{s},C, B_{N_{s}},\delta/m)$, getting $x_s$ as an estimation of $a_s^\top \theta^*$.
   \EndFor
 \State Denote $A_s \leftarrow\left(a_1, a_2, \ldots, a_s\right)^{\top} \in \mathbb{R}^{s \times d},$ $ X_s \leftarrow\left(x_1, x_2, \ldots, x_s\right)^{\top} \in \mathbb{R}^s$ and $W_s \leftarrow \operatorname{diag}\left(\frac{1}{\epsilon_1^2}, \frac{1}{\epsilon_2^2}, \ldots, \frac{1}{\epsilon_s^2}\right)$.
 \State Update $V_s \leftarrow V_{s-1}+\frac{1}{\epsilon_s^2} a_s a_s^{\top}$ and $\hat{\theta}_s \leftarrow V_s^{-1} A_s^{\top} W_s X_s$.
 \EndFor
\end{algorithmic}
\label{Algo:Heavy-QLinMAB}
\end{algorithm}

The details of Heavy-QLinUCB are showed in Algorithm \ref{Algo:Heavy-QLinMAB}, which adapts the Linear UCB framework. It runs in several epochs, and $m$ in Line 1 is an upper bound for the number of total epochs. 
In epoch $s$, the algorithm  first constructs a confidence set $\mathcal{C}_{s-1}$ for the underlying parameter $\theta^*$ in Line 3. Based on the confidence set, it picks the best arm $a_s$ among the arm set $\mathcal{A}$ in Line 4. After $a_s$ is chosen, it determines a carefully selected accuracy value $\epsilon_s$ for epoch $s$, and then the algorithm plays arm $a_s$ for the next $C N_s \log^{\frac{3}{2}}(1/\delta)$ rounds (Line 5-9).  When playing arm $a_s$ in epoch $s$, the algorithm implements the quantum algorithm $\mathsf{QTME}(\mathcal{O}_{s},N_{s},C, B_{N_{s}},\delta/m)$ to get a quantum estimator $x_s$ for rewards such that $|a_s^\top \theta^* - x_s| \le \epsilon_s$ where the failure probability is less than $\delta/m$. Next, in Line 10-11, it updates the estimate $\hat{\theta}_s$ of $\theta^*$ using a weighted least square estimator. That is
\begin{equation}
\label{WeiLSE}
\hat{\theta}_s=\underset{\theta \in \Theta}{\operatorname{argmin}} \sum_{k=1}^s \frac{1}{\epsilon_k^2}\left\|a_k^{\top} \theta-x_k\right\|_2^2+\lambda\|\theta\|_2^2,
\end{equation}
where $\lambda$ is a regularization parameter.

\begin{figure*}[!ht]
    \centering
    \subfigure{
      \includegraphics[width=0.32 \textwidth]{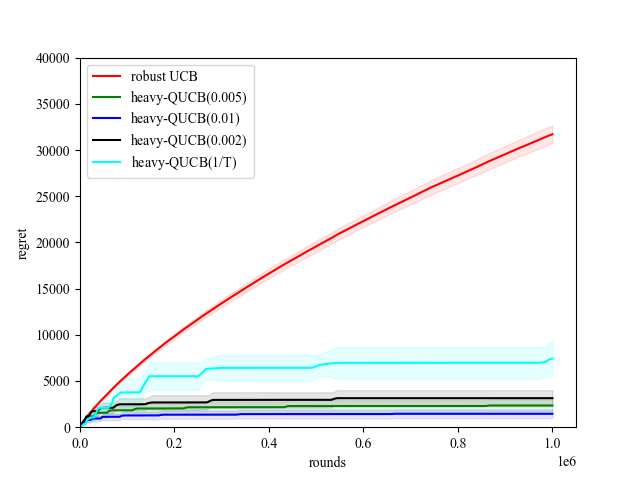}}
      \subfigure{\includegraphics[width=0.32\textwidth]{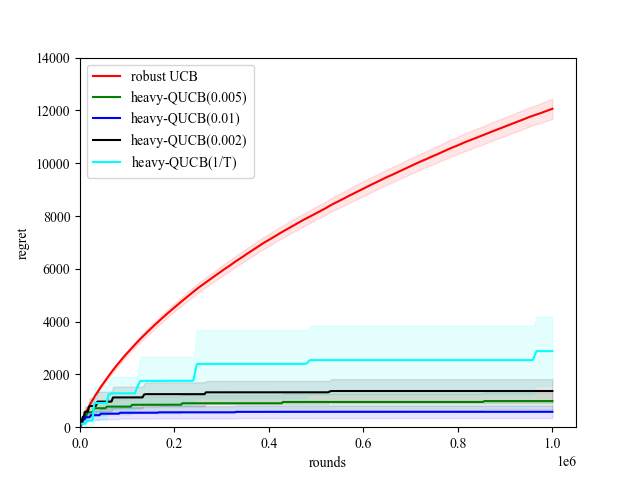}}
      \subfigure{\includegraphics[width=0.32\textwidth]{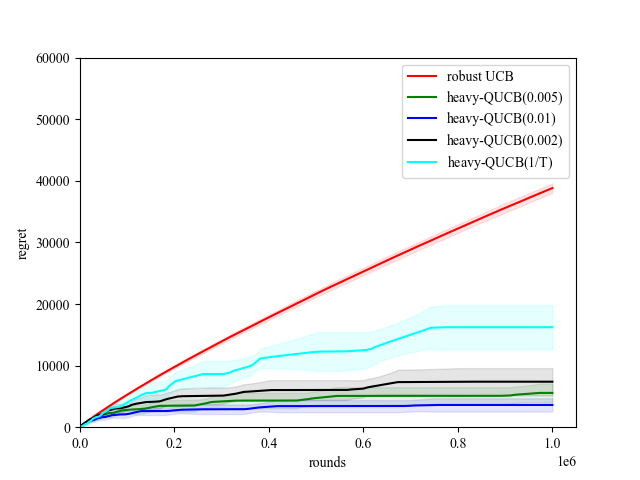}}
    \caption{Comparison between robust UCB and Heavy-QUCB with $v=0.5$, Heavy-QUCB ($1/T$) means $\delta = 1/T$}
    \label{fig:qmab_0.5}
\end{figure*}

The estimator \eqref{WeiLSE} has simple a closed-form solution as
follows. Let
$$V_s=\lambda I+\sum_{k=1}^s \frac{1}{\epsilon_k^2} a_k a_k^{\top}=\lambda I+A_s^{\top} W_s A_s \in \mathbb{R}^{d \times d}.$$
 Then,
$
\hat{\theta}_s:=V_s^{-1} A_s^{\top} W_s X_s,
$
where $A_s, X_s, W_s$ are defined in Line 10 in Algorithm \ref{Algo:Heavy-QLinMAB}. 
It is notable that 
we  need to carefully set $\epsilon_s=\left\|a_s\right\|_{V_{s-1}^{-1}}$ where $V_{s-1}$ is calculated in epoch $s-1$. This choice makes the determinant of $V_s$, $\text{det}(V_s)$, be double compared with the last epoch. Therefore,  Heavy-QLinUCB uses an implicit ``doubling trick" which makes the number of epochs be $O(d \log T)$ (Lemma \ref{lem:stage}), controls the estimation error of $x_s$ and $\epsilon_s$, and enforces 
 the confidence interval of  $\theta^*$  to decrease with the epoch increasing (Lemma \ref{lem:confiSet}). Specifically, we have the following two lemmas. 

\begin{lemma}
\label{lem:stage}
    Algorithm \ref{Algo:Heavy-QLinMAB} has at most $m= d \log \left(\frac{L^2 T^{\frac{4v}{1+v}}}{d \lambda}+1\right)$ epochs if $v\in [\frac{1}{3},1]$,  where $\lambda$ is the regularization parameter in \eqref{WeiLSE}.
\end{lemma}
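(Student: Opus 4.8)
The plan is to run the standard ``determinant doubling'' (elliptical potential) argument on $V_s$, where the real work is controlling the sum of inverse accuracies $\sum_s \epsilon_s^{-2}$ via the query budget, and where the restriction $v\in[\tfrac13,1]$ enters at exactly one point.

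First I would show that each epoch exactly doubles $\det(V_s)$. Applying the matrix determinant lemma to the rank-one update $V_s = V_{s-1} + \epsilon_s^{-2} a_s a_s^\top$ (Line 11),
\[
\det(V_s) = \det(V_{s-1})\left(1 + \epsilon_s^{-2}\, a_s^\top V_{s-1}^{-1} a_s\right) = \det(V_{s-1})\left(1 + \epsilon_s^{-2}\|a_s\|_{V_{s-1}^{-1}}^2\right).
\]
Because the algorithm sets $\epsilon_s = \|a_s\|_{V_{s-1}^{-1}}$ (Line 5), the bracketed factor is exactly $2$, so after $m$ epochs $\det(V_m) = 2^m \det(V_0) = 2^m \lambda^d$, using $V_0 = \lambda I_d$ from Line 1. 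This is the potential lower bound.

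Next I would produce a matching upper bound on $\det(V_m)$ that is independent of $m$. By the AM--GM (trace--determinant) inequality on the eigenvalues,
\[
\det(V_m) \le \left(\frac{\operatorname{tr}(V_m)}{d}\right)^d = \left(\lambda + \frac{1}{d}\sum_{s=1}^m \frac{\|a_s\|_2^2}{\epsilon_s^2}\right)^d \le \left(\lambda + \frac{L^2}{d}\sum_{s=1}^m \frac{1}{\epsilon_s^2}\right)^d,
\]
using $\|a_s\|_2 \le L$. Once I have bounded $\sum_{s=1}^m \epsilon_s^{-2} = O(T^{4v/(1+v)})$, combining the two bounds on $\det(V_m)$, taking logarithms of $2^m\lambda^d \le \det(V_m)$, and rearranging yields $m \le \tfrac{d}{\log 2}\log\!\big(1 + \tfrac{L^2\sum_s \epsilon_s^{-2}}{d\lambda}\big) = O\!\big(d\log(\tfrac{L^2 T^{4v/(1+v)}}{d\lambda}+1)\big)$, which is the claim.

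The main obstacle, and the sole place the hypothesis $v\ge\tfrac13$ is used, is bounding $\sum_s \epsilon_s^{-2}$. From Line 7, $\epsilon_s^{-(1+v)/(2v)} = N_s/(Cu^{1/(2v)}\log(m/\delta))$, hence $\epsilon_s^{-2} = \big(N_s/(Cu^{1/(2v)}\log(m/\delta))\big)^{4v/(1+v)}$. The termination rule (at most $T$ total queries), together with the $CN_s\log^{3/2}(1/\delta)$ rounds spent in epoch $s$, gives the budget constraint $\sum_{s=1}^m N_s \le T/(C\log^{3/2}(1/\delta))$. Now the exponent $\alpha := 4v/(1+v)$ satisfies $\alpha\ge 1$ \emph{precisely} when $v\ge\tfrac13$, so the norm monotonicity $\|x\|_\alpha\le\|x\|_1$ gives $\sum_s N_s^{\alpha} \le \big(\sum_s N_s\big)^{\alpha}$, and therefore $\sum_s \epsilon_s^{-2} \le O\big(T^{4v/(1+v)}\big)$ after absorbing the constant and logarithmic prefactors. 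I expect this to be the delicate step: if $v<\tfrac13$ then $\alpha<1$, the inequality reverses, and a Hölder bound introduces an extra $m^{1-\alpha}$ factor that makes the resulting inequality implicit in $m$ — which is exactly why the statement is restricted to $v\in[\tfrac13,1]$.
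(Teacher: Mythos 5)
Your proposal is correct and follows essentially the same route as the paper's proof: determinant doubling from the choice $\epsilon_s=\|a_s\|_{V_{s-1}^{-1}}$, the trace--determinant inequality with $\|a_s\|_2\le L$, and the query budget $\sum_s N_s \lesssim T$, with the restriction $v\ge \tfrac13$ entering through the same power-mean fact (your inequality $\sum_s N_s^{\alpha}\le(\sum_s N_s)^{\alpha}$ for $\alpha=\tfrac{4v}{1+v}\ge 1$ is exactly the paper's Lemma~\ref{lem:ineq1}, used there in the form $\sum_k \epsilon_k^{-(1+v)/(2v)}\ge\bigl(\sum_k \epsilon_k^{-2}\bigr)^{(1+v)/(4v)}$). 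The only difference is the logical direction: the paper shows that running $m$ epochs with the claimed $m$ forces at least $T$ rounds to be played, whereas you argue the contrapositive by upper-bounding $\sum_s \epsilon_s^{-2}$ via the budget and then solving the determinant comparison for $m$.
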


\begin{lemma}
\label{lem:confiSet}
    With probability at least $1-\delta$, for all $s \geq 0, \theta^* \in \mathcal{C}_s:=\left\{\theta \in \mathbb{R}^d:\left\|\theta-\hat{\theta}_s\right\|_{V_s} \leq\right.$ $\left.\lambda^{1 / 2} S+\sqrt{d s}\right\}$.
\end{lemma}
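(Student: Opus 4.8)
The plan is to analyze the weighted least squares estimator directly on a high-probability ``good event'' on which every quantum reward estimate is accurate. Writing $x_k = a_k^\top \theta^* + \eta_k$ with $\eta_k := x_k - a_k^\top\theta^*$, and using the identity $A_s^\top W_s A_s = V_s - \lambda I$, the closed form $\hat\theta_s = V_s^{-1}A_s^\top W_s X_s$ expands to
\[
\hat\theta_s - \theta^* = -\lambda V_s^{-1}\theta^* + V_s^{-1}A_s^\top W_s \eta, \qquad \eta = (\eta_1,\dots,\eta_s)^\top .
\]
I would then apply the triangle inequality in the $V_s$-norm and bound the two resulting pieces separately: a deterministic \emph{regularization bias} term and a \emph{noise} term driven by the estimation errors $\eta_k$.

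First I would set up the good event. By the accuracy guarantee of QTME (Theorem \ref{thm:mean_est}) with the choice $N_s = C u^{1/(2v)}\log(m/\delta)/\epsilon_s^{(1+v)/(2v)}$, a short computation substituting $N_s$ into the error bound of Theorem \ref{thm:mean_est} (with failure probability $\delta/m$) shows the estimation error is $O(\epsilon_s)$, which is at most $\epsilon_s$ once the constant $C$ is taken large enough; hence $|\eta_s| \le \epsilon_s$ for epoch $s$ except with probability $\delta/m$. Since by Lemma \ref{lem:stage} there are at most $m$ epochs and the per-epoch failure probability is uniformly $\delta/m$ regardless of the history, a union bound yields that $|\eta_k|\le\epsilon_k$ holds simultaneously for all $k$ with probability at least $1-\delta$. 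The remainder of the argument is deterministic on this event.

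For the bias term, $\|{-\lambda V_s^{-1}\theta^*}\|_{V_s} = \lambda\|\theta^*\|_{V_s^{-1}} \le \sqrt{\lambda}\,S$, using $V_s \succeq \lambda I$ together with $\|\theta^*\|_2 \le S$. For the noise term I would write $A_s^\top W_s\eta = \sum_{k=1}^s (\eta_k/\epsilon_k)\,z_k$ with $z_k := a_k/\epsilon_k$, collect the $z_k$ as columns of a matrix $Z$ so that $V_s = \lambda I + ZZ^\top$, and observe that $\|A_s^\top W_s\eta\|_{V_s^{-1}}^2 = w^\top (Z^\top V_s^{-1} Z)\,w$ where $w_k = \eta_k/\epsilon_k$. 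The push-through identity $(\lambda I + ZZ^\top)^{-1}Z = Z(\lambda I + Z^\top Z)^{-1}$ gives $Z^\top V_s^{-1} Z = M(\lambda I + M)^{-1} \preceq I$ with $M = Z^\top Z$, so on the good event the noise term is at most $\|w\|_2^2 = \sum_{k=1}^s (\eta_k/\epsilon_k)^2 \le s \le ds$. Combining the two bounds gives $\|\hat\theta_s - \theta^*\|_{V_s} \le \sqrt{\lambda}\,S + \sqrt{ds}$, i.e.\ $\theta^* \in \mathcal{C}_s$ (the case $s=0$ being immediate since $V_0=\lambda I$ and $\hat\theta_0=\mathbf{0}$).

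The main obstacle I anticipate is the noise-term control: establishing the spectral inequality $Z^\top V_s^{-1}Z \preceq I$ cleanly via the push-through identity, and gluing the per-epoch accuracy $|\eta_k|\le\epsilon_k$ across all epochs despite the fact that $\epsilon_k$, $N_k$ and the played arms are adaptively chosen based on the history — which is exactly why the uniform per-epoch failure bound $\delta/m$ and the a priori epoch count $m$ from Lemma \ref{lem:stage} are needed. I note that this argument actually yields the stronger radius $\sqrt{\lambda}\,S + \sqrt{s}$, and the stated $\sqrt{ds}$ then follows trivially since $d \ge 1$.
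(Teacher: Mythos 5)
Your proof is correct, but there is nothing in the paper to compare it against directly: the appendix proves only \Cref{lem:stage} and \Cref{thm:LinearReg}, and \Cref{lem:confiSet} is stated without proof (its analogue appears in \cite{wan2022quantum}, whose argument the paper implicitly inherits). Measured against that standard argument, your outline is the same --- write $x_k=a_k^\top\theta^*+\eta_k$, use $A_s^\top W_s A_s=V_s-\lambda I$ to obtain $\hat\theta_s-\theta^*=-\lambda V_s^{-1}\theta^*+V_s^{-1}A_s^\top W_s\eta$, bound the bias term by $\lambda^{1/2}S$ via $V_s\succeq\lambda I$, and build the good event $|\eta_k|\le\epsilon_k$ by substituting $N_k$ into \Cref{thm:mean_est} with failure probability $\delta/m$ and union-bounding over at most $m$ epochs --- but your treatment of the noise term is sharper. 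The routine way to reach the stated radius is $\lambda_{\max}(Z^\top V_s^{-1}Z)\le\operatorname{tr}(Z^\top V_s^{-1}Z)=\operatorname{tr}\bigl(V_s^{-1}(V_s-\lambda I)\bigr)\le d$, which yields $\sqrt{ds}$; your push-through identity instead gives $Z^\top V_s^{-1}Z=M(\lambda I+M)^{-1}\preceq I$ and hence the stronger radius $\lambda^{1/2}S+\sqrt{s}$, of which the lemma is a weakening. One caveat worth recording: your union bound needs the number of epochs to be at most $m$ \emph{deterministically}, and \Cref{lem:stage} supplies this only for $v\in[\frac{1}{3},1]$; the lemma as stated carries no such restriction on $v$, so your proof (like the paper's own \Cref{thm:LinearReg}) implicitly assumes that regime. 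This is a defect of the paper's presentation rather than of your argument, and it does not affect correctness where the lemma is actually used.
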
 

\begin{theorem}[Regret bound]
\label{thm:LinearReg}
    Under the  heavy-tailed assumption \ref{heavyDefi2} for rewards with $v\in [\frac{1}{3}, 1]$, with probability at least $1-\delta$, the upper bound of the expected regret of  Algorithm \ref{Algo:Heavy-QLinMAB}, $\mathbb{E}[\mathcal{R}_T]$, is
   $O( d^2  u^{\frac{1}{1+v}} \log^{\frac{3}{2}} (\frac{L^2 T^{\frac{4v}{1+v}}}{d \lambda}+1)  T^{\frac{1-v}{1+v}} \cdot ( \log\frac{d \log (\frac{L^2 T^{\frac{4v}{1+v}}}{d \lambda}+1)}{\delta})^{\frac{2v}{1+v}}).$
\end{theorem}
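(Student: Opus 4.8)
The plan is to condition on the ``good event'' of Lemma~\ref{lem:confiSet}, on which $\theta^*\in\mathcal{C}_{s-1}$ for every epoch, and then run the standard optimistic LinUCB decomposition while tracking how the quantum estimation accuracy $\epsilon_s$ and the epoch length $T_s:=CN_s\log^{3/2}(1/\delta)$ interact. First I would verify that the choice $N_s=Cu^{1/(2v)}\log(m/\delta)/\epsilon_s^{(1+v)/(2v)}$ is precisely what makes the QTME guarantee of Theorem~\ref{thm:mean_est} (invoked with failure probability $\delta/m$) yield $|x_s-a_s^\top\theta^*|\le\epsilon_s$: substituting this $N_s$ into the bound $u^{1/(1+v)}(\log(m/\delta))^{2v/(1+v)}/N_s^{2v/(1+v)}$ collapses it to $\epsilon_s$ up to the constant $C$. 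A union bound over the $m=O(d\log T)$ epochs (Lemma~\ref{lem:stage}) keeps the total failure probability below $\delta$; this is exactly the event on which Lemma~\ref{lem:confiSet} holds.

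On this event, optimism drives the per-round regret bound. Since $(a_s,\tilde\theta_s)$ maximizes $a^\top\theta$ over $\mathcal{A}\times\mathcal{C}_{s-1}$ and $\theta^*\in\mathcal{C}_{s-1}$, we get $a_s^\top\tilde\theta_s\ge (a^*)^\top\theta^*$, so the instantaneous regret obeys
\[
(a^*-a_s)^\top\theta^*\le a_s^\top(\tilde\theta_s-\theta^*)\le \|a_s\|_{V_{s-1}^{-1}}\,\|\tilde\theta_s-\theta^*\|_{V_{s-1}}=\epsilon_s\,\|\tilde\theta_s-\theta^*\|_{V_{s-1}},
\]
using Cauchy--Schwarz in the $V_{s-1}$-geometry together with $\epsilon_s=\|a_s\|_{V_{s-1}^{-1}}$. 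Because both $\tilde\theta_s$ and $\theta^*$ lie in $\mathcal{C}_{s-1}$, the triangle inequality gives $\|\tilde\theta_s-\theta^*\|_{V_{s-1}}\le 2(\lambda^{1/2}S+\sqrt{d(s-1)})\le 2(\lambda^{1/2}S+\sqrt{dm})$. Multiplying by the epoch length $T_s$ and summing, the regret on the good event is at most $2(\lambda^{1/2}S+\sqrt{dm})\sum_{s=1}^m T_s\epsilon_s$; on the complementary event I would bound the per-round regret trivially by $2LS$ and the total contribution by $2LST\delta$, which is lower order for the relevant choice of $\delta$.

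The crux is converting $\sum_s T_s\epsilon_s$ into the target $T^{(1-v)/(1+v)}$ rate. Writing $\kappa:=C^2u^{1/(2v)}\log(m/\delta)\log^{3/2}(1/\delta)$, the definitions give $T_s=\kappa\,\epsilon_s^{-(1+v)/(2v)}$, i.e.\ $\epsilon_s=(\kappa/T_s)^{2v/(1+v)}$, hence $T_s\epsilon_s=\kappa^{2v/(1+v)}\,T_s^{(1-v)/(1+v)}$. Since the exponent $(1-v)/(1+v)\in[0,1)$, the map $x\mapsto x^{(1-v)/(1+v)}$ is concave, so by the power-mean (Jensen) inequality with the global budget $\sum_{s=1}^m T_s=T$ and the epoch count $m=d\log(L^2T^{4v/(1+v)}/(d\lambda)+1)$ from Lemma~\ref{lem:stage},
\[
\sum_{s=1}^m T_s^{(1-v)/(1+v)}\le m\,(T/m)^{(1-v)/(1+v)}=m^{2v/(1+v)}\,T^{(1-v)/(1+v)}.
\]
Substituting $\kappa^{2v/(1+v)}$ (which produces the $u^{1/(1+v)}$ and $(\log(m/\delta))^{2v/(1+v)}$ factors) and collecting the $\sqrt{dm}$ and $m^{2v/(1+v)}$ terms, I would finally invoke $v\ge\tfrac13$ to absorb the resulting powers of $d$ and of $\log(L^2T^{4v/(1+v)}/(d\lambda)+1)$ into $d^2$ and $\log^{3/2}(\cdots)$ respectively, giving the stated bound.

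The main obstacle I anticipate is this third step: the algebraic ``balancing'' that turns the product $T_s\epsilon_s$ into $\kappa^{2v/(1+v)}T_s^{(1-v)/(1+v)}$ and the concavity/power-mean argument that extracts $T^{(1-v)/(1+v)}$ from the global budget are where the quadratic quantum speedup actually materializes, and matching the separate logarithmic factors while checking that the $v\ge\tfrac13$ restriction inherited from Lemma~\ref{lem:stage} is exactly what lets the $d$- and $\log$-powers be absorbed into $d^2$ and $\log^{3/2}$ requires some care. Everything upstream---the estimation accuracy and the determinant-doubling $\det(V_s)=2\det(V_{s-1})$ forced by $\epsilon_s=\|a_s\|_{V_{s-1}^{-1}}$ that underlies both lemmas---I would take directly from Theorem~\ref{thm:mean_est} and Lemmas~\ref{lem:stage} and \ref{lem:confiSet}.
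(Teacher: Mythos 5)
Your proposal is correct and is essentially the paper's own proof: condition on the good event of Lemma~\ref{lem:confiSet}, use optimism plus Cauchy--Schwarz in the $V_{s-1}$-geometry to get per-round regret at most $2\epsilon_s(\lambda^{1/2}S+\sqrt{d(s-1)})$, convert each epoch's contribution via $T_s\epsilon_s\propto \kappa^{2v/(1+v)}T_s^{(1-v)/(1+v)}$, and sum over the at most $m$ epochs of Lemma~\ref{lem:stage}. Three small points of comparison: your power-mean step $\sum_s T_s^{(1-v)/(1+v)}\le m^{2v/(1+v)}T^{(1-v)/(1+v)}$ is tighter than the paper's termwise bound $N_s\le T$ (which gives only $m\,T^{(1-v)/(1+v)}$, still sufficient); your $\kappa$ keeps the $\log^{3/2}(1/\delta)$ epoch-length factor that the paper's proof silently drops (the paper counts each stage as $N_s$ rounds), so your bound carries an extra $(\log(1/\delta))^{3v/(1+v)}$ that is actually more faithful to the algorithm as written; and $v\ge\frac{1}{3}$ is not what lets the $d$- and $\log$-powers be absorbed --- that absorption works for all $v\in(0,1]$, since $2v/(1+v)\le 1$ gives $\sqrt{dm}\cdot m^{2v/(1+v)}\le d^{1/2}m^{3/2}=d^2\log^{3/2}(\cdot)$ --- rather, it is needed only inside the proof of Lemma~\ref{lem:stage} via Lemma~\ref{lem:ineq1}, exactly as you inherit it.
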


\begin{remark}
    Note that when $v=1$, we can recover the result in  \cite[Theorem 3]{wan2022quantum} for quantum SLB with bounded  rewards. On the other hand, compared with the optimal regret bound of $\Theta(dT^{\frac{1}{1+v}})$ for classical heavy-tailed SLB with infinite arms in \cite{shao2018almost}, our regret for quantum heavy-tailed SLB with infinite arms in above theorem improves a factor of $\tilde{O}(T^{\frac{v}{1+v}})$ for  term $T$.
\end{remark}

\vspace{-0.2in}
\section{Experiments}\label{sec:expe}
In this section, we conduct experiments to demonstrate the performance of our two quantum bandit algorithms. For all experiments, we repeat 100 times and calculate the average regret and the standard deviation. Our experiments are executed on a computer equipped with AMD Ryzen 7 7300X CPU and 16GB memory. Due to the space limit, here we only show partial results for QMAB. Additional results of QMAB and all details for  QSLB are shown in \Cref{apx:experiment}.

\noindent \textbf{QMAB settings.} For the synthetic reward generation, we
follow the same settings as in the previous work on (classical) MAB
with heavy-tailed rewards \cite{lee2020optimal,tao2022optimal}. Specifically, we set $K=5$ and consider three instances $S_1$, $S_2$, and $S_3$, whose reward means all are restricted in the interval $[0.1, 1]$: For $S_1$ we let the mean gaps of reward distributions of sub-optimal arms decrease
linearly where the largest mean is always 0.9 and the
smallest mean is always 0.1 (thus the means of rewards in $S_1$ are \{0.1, 0.3, 0.5, 0.7, 0.9\}); In $S_2$, we consider an instance that a larger
fraction of arms have large sub-optimal gaps, where
we set the mean of each arm a by a quadratic convex
function $\mu_a=0.05(a-5)^2+0.1$; In $S_3$, we set the mean of each arm $a$  by a 
concave function $\mu_a=-0.05(a-1)^2+0.9$. Compared to $S_2$, $S_3$ has a larger fraction of arms with small sub-optimal gaps. In each instance, the rewards of each arm $i \in [K]$ are sampled from a Pareto distribution with shape parameter $\alpha$ and scale parameter $\lambda_i$. We set $\alpha = 1.05 + v$ which guarantees that the
$1+v$-th moment of reward distribution always exists
and is upper bounded by $u = \frac{\alpha \lambda_i^{1+v}}{\alpha - (1+v)}$ in \Cref{heavyDefi}. For a given $\alpha$, the mean is $\frac{\alpha \lambda_i}{\alpha -1}$ and this implies $\lambda_i = \frac{(\alpha-1)\mu_i}{\alpha}$. We take the maximum of $\frac{\alpha \lambda_i^{1+v}}{\alpha - (1+v)}$ among all arms as $u$ in \Cref{heavyDefi}. We set $v$ to be either 0.2 or 0.5 and fix the number of rounds $T$ as $10^6$. 

\noindent {\bf Results.} To show the superiority of QMAB to the classical MAB, we compare our method with the robust UCB (with $\delta=\frac{1}{T}$) in \cite{bubeck2013bandits}. Besides that, for our method heavy-QUCB, we also consider different failure probabilities $\delta=\{0.005, 0.01, 0.002, \frac{1}{T}\}$. The results for $v = 0.5$ and $v = 0.2$ are shown in \Cref{fig:qmab_0.5}  and \Cref{fig:qmab_0.2} in \Cref{apx:experiment}, respectively.  From the figures, we can see that Heavy-QUCB has much lower expected regret than robust UCB,
and such discrepancy becomes more obvious when $v$ becomes larger. This is due to the fact that theoretically we show heavy-QUCB can improve a factor of $\tilde{O}(T^{\frac{v}{1+v}})$. Moreover, for each method we can see larger $v$ will make the expected regret smaller, this is due to that the regret bound is either $\tilde{O}(T^\frac{1}{1+v})$ or $\tilde{O}(T^\frac{1-v}{1+v})$. Moreover, smaller failure probability will make the regret become larger, which can also be observed in our regret analysis.  We can also see similar phenomenons in SLB. In summary, all experimental results corroborate our theories.  


\vspace{-0.1in}
\section{Conclusions}\vspace{-0.05in}
We investigated the problems of quantum multi-armed bandits (QMAB) and stochastic linear bandits (QSLB)
with heavy-tailed rewards. To get a confidence radius of mean estimation of rewards with the quantum reward oracle, we first proposed a novel quantum mean estimation method namely QTME for heavy-tailed random variables which achieves a quadratic improvement on  estimation error compared with the classical one. Based on our novel quantum mean estimator, we proposed a UCB-based algorithm named Heavy-QUCB  for heavy-tailed MAB with quantum reward oracle and established a regret bound of $\Tilde{O}(T^{\frac{1-v}{1+v}})$, where $T$ is the number of rounds. For QSLB, we developed the method of Heavy-QLinUCB based on the Linear UCB framework and show a similar regret bound. Finally, our theoretical results are supported by experimental results, which  show the superiority of our algorithms to the classical ones.


\bibliographystyle{named}
\bibliography{ijcai22}

\newpage
\onecolumn
\appendix

\section{Useful Lemmas}

\begin{lemma}[Bernstein's Inequality \cite{vershynin2018high}]\label{alemma3}
Let $X_1, \cdots X_n$ be $n$ independent zero-mean random variables. Suppose $|X_i|\leq M$ and $\mathbb{E}[X_i^2]\leq s$ for all $i$. Then for any $t>0$, we have 
\begin{equation*}
    \mathbb{P}\{\frac{1}{n}\sum_{i=1}^n X_i \geq t\}\leq \exp(-\frac{\frac{1}{2}t^2n}{s+\frac{1}{3}Mt})
\end{equation*}
\end{lemma}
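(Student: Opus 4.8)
The plan is to prove this via the standard Chernoff (exponential moment) method. First I would reduce to the unnormalized sum and apply Markov's inequality to the exponentiated sum: for any $\lambda > 0$,
$$
\mathbb{P}\left\{\sum_{i=1}^n X_i \geq nt\right\} \leq e^{-\lambda n t}\,\mathbb{E}\left[e^{\lambda \sum_{i=1}^n X_i}\right] = e^{-\lambda n t}\prod_{i=1}^n \mathbb{E}\left[e^{\lambda X_i}\right],
$$
where the factorization of the moment generating function (MGF) uses independence of the $X_i$.

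The key step is bounding each factor $\mathbb{E}[e^{\lambda X_i}]$. Using the Taylor expansion of the exponential, the zero-mean condition $\mathbb{E}[X_i]=0$ to kill the linear term, and the moment bound $\mathbb{E}[|X_i|^k] \leq M^{k-2}\mathbb{E}[X_i^2] \leq M^{k-2}s$ for $k \geq 2$ (which follows by writing $|X_i|^k = |X_i|^{k-2}|X_i|^2$ and using $|X_i|\leq M$), I would obtain
$$
\mathbb{E}[e^{\lambda X_i}] \leq 1 + \frac{s}{M^2}\sum_{k\geq 2}\frac{(\lambda M)^k}{k!} = 1 + \frac{s}{M^2}\left(e^{\lambda M} - 1 - \lambda M\right).
$$
Then, using the elementary fact $k! \geq 2\cdot 3^{k-2}$ for $k \geq 2$ to dominate the tail of the series by a geometric one, together with $1+x\leq e^x$, this gives for all $0 < \lambda < 3/M$ the clean bound
$$
\mathbb{E}[e^{\lambda X_i}] \leq \exp\left(\frac{s\lambda^2/2}{1-\lambda M/3}\right).
$$

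Multiplying this over all $n$ terms and combining with the first display yields
$$
\mathbb{P}\left\{\sum_{i=1}^n X_i \geq nt\right\} \leq \exp\left(-\lambda n t + \frac{ns\lambda^2/2}{1-\lambda M/3}\right).
$$
Finally I would optimize the exponent over $\lambda \in (0, 3/M)$ by the choice $\lambda = \frac{t}{s+Mt/3}$. A direct substitution collapses $1 - \lambda M/3$ to $\frac{s}{s+Mt/3}$, so that $\lambda n t = \frac{nt^2}{s+Mt/3}$ and the second term equals $\frac{nt^2}{2(s+Mt/3)}$; the exponent therefore reduces to $-\frac{nt^2}{2(s+Mt/3)}$, which is exactly $-\frac{\tfrac12 t^2 n}{s + \tfrac13 M t}$. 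Rewriting the event $\sum_i X_i \geq nt$ as $\frac1n\sum_i X_i \geq t$ gives the claimed inequality.

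The main obstacle is the MGF bound, since one must control all higher moments of each $X_i$ uniformly: this is precisely where the two hypotheses combine, with boundedness $|X_i|\leq M$ and the variance bound $\mathbb{E}[X_i^2]\leq s$ feeding into $\mathbb{E}[|X_i|^k]\leq M^{k-2}s$, and where the factorial lower bound produces the characteristic $(1-\lambda M/3)^{-1}$ factor. The remaining pieces — the geometric summation and the closed-form optimization over $\lambda$ — are routine calculus once that bound is established.
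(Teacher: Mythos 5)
Your proof is correct. Note, however, that the paper itself does not prove this lemma: it is stated in the appendix as a known result, imported by citation from Vershynin's textbook, and used only as a tool (in the proof of Theorem 2). Your argument --- Markov's inequality applied to the exponentiated sum, the moment bound $\mathbb{E}[|X_i|^k]\leq M^{k-2}s$, the factorial estimate $k!\geq 2\cdot 3^{k-2}$ producing the $(1-\lambda M/3)^{-1}$ factor, and the closed-form optimization $\lambda = t/(s+Mt/3)$ --- is precisely the standard Chernoff-method proof of one-sided Bernstein, i.e.\ the proof found in the cited source. All steps check out: the substitution indeed gives $1-\lambda M/3 = s/(s+Mt/3)$, and the exponent collapses to $-\frac{\tfrac12 t^2 n}{s+\tfrac13 Mt}$ as claimed, so your write-up supplies a complete, self-contained justification of a statement the paper only quotes.
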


\begin{lemma}
\label{lem:ineq1}
    Let $f(x)=(\sum_{k=1}^m (a_k)^x)^{\frac{1}{x}}$ where $a_k >0$ and $x\in (0,+\infty)$, then $f(x)$ is a decreasing function on $x$.
\end{lemma}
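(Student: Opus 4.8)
The plan is to prove the stronger pointwise statement that for any $0 < x < y$ we have $f(y) \le f(x)$, which immediately yields that $f$ is decreasing on $(0,+\infty)$. Differentiating $\ln f$ directly is feasible but somewhat messy, so instead I would reduce the claim to an elementary observation about powers of numbers lying in $(0,1]$ by means of a normalization.

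First I would fix $0 < x < y$ and rescale the coordinates. Set $b_k \triangleq a_k / \left(\sum_{j=1}^m a_j^x\right)^{1/x}$ for each $k \in [m]$. By construction $\sum_{k=1}^m b_k^x = 1$, and since every summand is positive this forces $b_k^x \le 1$, hence $0 < b_k \le 1$ for all $k$.

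The key step is then immediate: because $b_k \le 1$ and $y - x > 0$, we have $b_k^{y} = b_k^x \cdot b_k^{\,y-x} \le b_k^x$, so $\sum_k b_k^y \le \sum_k b_k^x = 1$, and therefore $\left(\sum_k b_k^y\right)^{1/y} \le 1$. Unwinding the normalization gives $\left(\sum_k b_k^y\right)^{1/y} = \left(\sum_k a_k^y\right)^{1/y} \big/ \left(\sum_j a_j^x\right)^{1/x} = f(y)/f(x)$, so the inequality becomes $f(y) \le f(x)$, as desired.

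I do not expect any real obstacle here, as this is a standard monotonicity property of $\ell_p$-quasinorms; the only care needed is the bookkeeping in the normalization and verifying the rescaled identity $\left(\sum_k b_k^y\right)^{1/y} = f(y)/f(x)$. As an alternative route I could instead set $g(x) = \frac{1}{x}\ln\!\left(\sum_k a_k^x\right)$ and show $g'(x) = -H(p)/x^2 \le 0$, where the $p_k = a_k^x / \sum_j a_j^x$ form a probability vector and $H(p) = -\sum_k p_k \ln p_k \ge 0$ is its Shannon entropy; nonnegativity of entropy then recovers the same conclusion.
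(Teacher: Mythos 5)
Your proposal is correct, and your main argument is genuinely different from the paper's. The paper proves the lemma by calculus: it sets $g(x)=\ln f(x)=\frac{1}{x}\ln\bigl(\sum_{k=1}^m a_k^x\bigr)$, differentiates, and rewrites $g'(x)$ as $\frac{1}{x^2}\sum_k p_k \ln p_k$ with $p_k = a_k^x/\sum_j a_j^x$, which is nonpositive since each $p_k\le 1$ --- exactly the negative-entropy observation you sketch as your \emph{alternative} route. Your primary route instead proves the pointwise inequality $f(y)\le f(x)$ for $0<x<y$ by normalizing $b_k = a_k/f(x)$ so that $\sum_k b_k^x=1$, forcing $b_k\le 1$ and hence $\sum_k b_k^y\le \sum_k b_k^x = 1$; unwinding the normalization (which you verify correctly: $\bigl(\sum_k b_k^y\bigr)^{1/y}=f(y)/f(x)$) gives the claim. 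This is the standard proof of monotonicity of $\ell_p$-quasinorms, and it buys you an elementary, calculus-free argument with no differentiability bookkeeping; it yields non-strict monotonicity, but that is all the paper ever uses (Lemma 2 is invoked only to compare $f$ at the two exponents $\frac{1+v}{2v}$ and $2$ in the epoch-counting argument). The paper's derivative computation, in exchange, gives strict decrease for $m\ge 2$ and an explicit rate $g'(x)=-H(p)/x^2$, though neither is needed downstream. Both arguments are sound; yours is arguably the cleaner one for the statement as used.
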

\begin{proof}[\bf Proof of Lemma \ref{lem:ineq1}]
    Let $g(x)=\ln f(x)= \frac{1}{x} \ln \left ( \sum_{k=1}^m (a_k)^x\right)$. Then 
    $$\begin{aligned}
         \frac{\mathrm{d} g(x) }{\mathrm{d} x} &= -\frac{1}{x^2} \ln \left ( \sum_{k=1}^m (a_k)^x\right)+\frac{1}{x} \frac{\sum_{k=1}^m(a_k)^x\ln a_k}{\sum_{k=1}^m (a_k)^x}\\
         & =\frac{\sum_{k=1}^m(a_k)^x\ln (a_k)^x - \sum_{k=1}^m(a_k)^x\ln \left(\sum_{k=1}^m(a_k)^x\right)}{x^2\sum_{k=1}^m (a_k)^x}\\
         &= \frac{\sum_{k=1}^m(a_k)^x\ln \left(\frac{(a_k)^x}{\sum_{k=1}^m(a_k)^x}\right)}{x^2\sum_{k=1}^m (a_k)^x}\\
         & < 0.
    \end{aligned}$$
    Thus, we get the result.
\end{proof}

\section{Omitted Proofs of \Cref{sec:mean}}\label{apx:mean}

\begin{proof}[\bf Proof of \Cref{thm:basic}]

\begin{align*}
|\widehat{\mu} - \mu| 
&\leq \sum^k_{\ell=0} a_\ell \cdot |\widehat{\mu}_\ell - \mu_\ell| + \E[X\cdot \1_{X > B}] \\
&\leq \sum^k_{\ell=0} \frac{\sqrt{a_\ell \cdot \mu_\ell} \log(1/ \delta)}{t} + \sum^k_{\ell =0}\frac{a_\ell \log(1/ \delta)^2}{t^2} + \E[X \cdot \1_{X>B}] \\
&\leq \frac{B \log(1/ \delta)}{t n} + \sum^k_{\ell = 1} \frac{\sqrt{2 \E[X^2 \cdot \1_{a_{\ell-1<X \leq a_\ell}}]} \log(1/ \delta)}{t} + \frac{2 B \log(1/ \delta)^2}{t^2} + \E[X \cdot \1_{X>B}] \\
&= \frac{B \log(1/ \delta)}{c n^2 \sqrt{\log(1/\delta)}} + \sum^k_{\ell = 1} \frac{\sqrt{2 \E[X^2 \cdot \1_{a_{\ell-1<X \leq a_\ell}}]} \log(1/ \delta)}{c n \sqrt{\log(1/\delta)}} + \frac{2 B \log(1/ \delta)^2}{c^2 n^2 \log(1/\delta)} + \E[X \cdot \1_{X>B}]  \\ 
&\leq \frac{\sqrt{2k} \cdot \sqrt{\sum^k_{\ell = 1} \E[X^2 \cdot \1_{a_{\ell-1} < X \leq a_\ell}]} \cdot \log(1/ \delta)}{c n \sqrt{\log(1/\delta)}} + \frac{3B \log(1/ \delta)^2}{c n^2 \sqrt{\log(1/\delta)}} + \E[X \cdot \1_{X>B}] \\
&= \frac{\sqrt{2}\cdot \sqrt{\E[X^2 \cdot \1_{|X| \leq B} ] } \cdot \log(1/ \delta)}{c n } + \frac{3 B \log(1/ \delta)^2}{c n^2 \sqrt{\log(1/\delta)}} + \E[X \cdot \1_{X>B}] \\
&\leq \frac{\sqrt{2} \cdot \sqrt{u B^{1-v}} \cdot \log(1/ \delta) }{c n} + \frac{3B \log(1/ \delta)^2}{c n^2 \sqrt{\log(1/\delta)}} + \frac{u}{B^v}   
\end{align*}
where the second inequality follows from Bernstein's inequality in Lemma \ref{alemma3}, the third inequality uses $a_0 \mu_0 \leq a_0^2 = \frac{B^2}{m^2}$ and $a_\ell \mu_\ell \leq 2 \E[X^2 \cdot \1_{a_{\ell -1} < X \leq a_\ell}]$ when $\ell \geq 1$, the fourth inequality uses the Cauchy-Schwarz inequality, the sixth inequality uses the fact that $\E[X^2 \cdot \1_{|X| \leq B}] \leq u B^{1-v}$ and $\E[X \cdot \1_{|X| > B}] \leq \frac{u}{B^v}$ when $\E[|X|^{1+v} \leq u$ and the equalities use the definition of $t$ and $k$.

\end{proof}

\begin{proof}[\bf Proof of \Cref{thm:mean_est}]
Using the fact that $X = Y_+  - Y_-$ and taking $B_t=\left(\frac{\sqrt{u}t}{\log (1/\delta)}\right)^{\frac{2}{1+v}}$, we obtain that
\begin{align*}
|\widehat{\mu}_{B_n}-\mu| &\leq |\widehat{\mu}_{Y_+} - \mu_{Y_+}| + |\widehat{\mu}_{Y_-} - \mu_{Y_-}| \\
&\leq \frac{\sqrt{2u B_n^{1-v}}\log(1/\delta)}{n}+\frac{3B_n \log^2(1/\delta)}{n^2}+ \frac{u}{B_n^v}\\
&\leq  C \frac{u^\frac{1}{1+v}(\log\frac{1}{\delta})^{\frac{2v}{1+v}}}{n^{\frac{2v}{1+v}}}
\end{align*}

\end{proof}

\section{Omitted Proofs of Section \ref{sec:MAB}}
\begin{proof}[\bf Proof of Theorem \ref{thm:reg1}]
    For each arm $i$, let $\mathcal{S}_i$ be the set of stages when arm $i$ is played, and denote $|\mathcal{S}_i|=M_i$. Initial
stages are not included in $\mathcal{S}_i$. According to Algorithm \ref{Algo:Heavy-QMAB}, each time we find arm $i$ by adopting UCB in Line 7 in some stages, $N_i$ is doubled subsequently. Then we play arm $i$
for consecutive $N_i$ rounds. This means that the number of rounds of each stage in $\mathcal{S}_i$ are $2^1, 2^2,\dots,2^{M_i}$.  In total, arm $i$ has been played for $2^{M_i+1}-1$ rounds. Because the total number of rounds is at most $T$, we have
$$\sum_{i=1}^K \left(2^{M_i+1}-1\right) \le T.$$
Because $2^x$ is a convex function in $x \in [0,+\infty)$, by Jensen’s inequality we have$$
\sum_{i=1}^K 2^{M_i+1} \geq K \cdot 2^{\frac{1}{K} \sum_{i=1}^K\left(M_i+1\right)}
$$

Then we have,
$$\sum_{i=1}^K M_i \le K\log\left(\frac{T+K}{K}\right)-K.$$
Since QTME is called for $K+\sum_{i=1}^K {M_i}$ times, by the union bound, with probability at least $1-K\delta \log \frac{T+K}{K}$ the output estimate of every invocation of QTME satisfies \eqref{eq:MeanEst}.We refer to the
 event as the good event and let $\mathcal{E}$ denote the good event. 

 Below we assume the good event holds. Recall that $i^*$ is the optimal arm and $i_s$ is the arm chosen by the algorithm during stage $s$. By the Line 7 of Algorithm \ref{Algo:Heavy-QMAB}, we have $$\hat{\mu}\left(i_s\right)+\beta_{i_s} \geq \hat{\mu}\left(i^*\right)+\beta_{i^*}.$$ Under good event, $${\mu}\left(i_s\right)+\beta_{i_s} \geq \hat{\mu}\left(i_s\right) \quad \text{and} \quad \hat{\mu}\left(i^*\right)+\beta_{i^*} \ge {\mu}\left(i^*\right).$$ Therefore, we obtain$$
\mu\left(i_s\right)+2 \beta_{i_s} \geq \hat{\mu}\left(i_s\right)+\beta_{i_s} \geq \hat{\mu}\left(i^*\right)+\beta_{i^*} \geq \mu\left(i^*\right),
$$and it follows that 
 \begin{equation}
\Delta_{i_s}:=\mu\left(i^*\right)-\mu\left(i_s\right) \leq 2 \beta_{i_s}.
\end{equation}
For each arm $i$, we denote by $\mathcal{R}(T;i)$ the contribution of arm $i$ to the cumulative regret over
$T$ rounds. By our notation above, arm $i$ is pulled in $M_i$ stages and the initialization stage. initialization stages it is pulled for $1$ time. In each stage of $\mathcal{S}_i$ it is pulled for $N_i= 2^1,\dots,2^{M_i}$ times respectively, and the reward gap $\Delta_i \le 2\beta_i$ in the last stage is $2C\frac{u^\frac{1}{1+v}(\log\frac{1}{\delta})^{\frac{2v}{1+v}}}{2^{\frac{2v}{1+v}M_i}}$. Note that the index of the stage in $\mathcal{S}_i$ does not influence the gap $\Delta_{i_s}$. Therefore, 
we can use $2C\frac{u^\frac{1}{1+v}(\log\frac{1}{\delta})^{\frac{2v}{1+v}}}{2^{\frac{2v}{1+v}M_i}}$ to bound the gap of $\mu(i^*)$ and $\mu(i_s)$. Thus, we have,$$
\begin{aligned}
    \mathcal{R}(T;i) &\le  \sum_{m=0}^{M_i} 2^m \cdot 2C\frac{u^\frac{1}{1+v}(\log\frac{1}{\delta})^{\frac{2v}{1+v}}}{2^{\frac{2v}{1+v}M_i}} \\
    & \le 4Cu^\frac{1}{1+v}(\log\frac{1}{\delta})^{\frac{2v}{1+v}} 2^{\frac{1-v}{1+v}M_i}\\
    & \le 4Cu^\frac{1}{1+v}(\log\frac{1}{\delta})^{\frac{2v}{1+v}} T^{\frac{1-v}{1+v}}
\end{aligned}$$
where the last inequality comes from the fact $M_i \le \log T$.
The cumulative regret is the summation of $ \mathcal{R}(T;i)$ for $i\neq i^*$. We have $$\mathcal{R}_T =\sum_{i\neq i^*}\mathcal{R}(T;i)  \le 4Cu^\frac{1}{1+v}(\log\frac{1}{\delta})^{\frac{2v}{1+v}} (K-1)T^{\frac{1-v}{1+v}}.$$

Since we have good event $\mathcal{E}$ with probability $1-K\delta \log \frac{T+K}{K}$, we can obtain the regret bound by taking $\delta=\frac{1}{T}$
$$\begin{aligned}
    &\mathbb{E}[\mathcal{R}_T]\\
    \le &\left(1-\frac{K}{T} \log \frac{T+K}{K}\right)\mathbb{E}[[\mathcal{R}_T |\mathcal{E}]+\frac{K}{T} \log \frac{T+K}{K}\mathbb{E}[[\mathcal{R}_T |\bar{\mathcal{E}}]\\
    \le & 4Cu^\frac{1}{1+v}(\log T)^{\frac{2v}{1+v}} (K-1)T^{\frac{1-v}{1+v}}+\\
    &T \cdot Cu^\frac{1}{1+v}(\log T)^{\frac{2v}{1+v}}\frac{K}{T} \log \frac{T+K}{K}\\
    \le& O\left(u^\frac{1}{1+v}  K T^{\frac{1-v}{1+v}} \log T \right)
\end{aligned}$$
\end{proof}
\section{Omitted Proofs of Section \ref{sec:SLB}}
\begin{proof}[\bf Proof of Lemma \ref{lem:stage}]
    We show that if Algorithm \ref{Algo:Heavy-QLinMAB} executes $m$ stages, then at least $T$ rounds are played. We first give a lower bound for $\sum_{k=1}^m \frac{1}{\epsilon_k^2}$. For $k \ge 0$,
    $$
\begin{aligned}
&\operatorname{det}\left(V_{k+1}\right) \\
& =\operatorname{det}\left(V_k+\frac{1}{\epsilon_{k+1}^2} a_{k+1} a_{k+1}^{\top}\right) \\
& =\operatorname{det}\left(V_k^{1 / 2}\left(I+\frac{1}{\epsilon_{k+1}^2} V_k^{-1 / 2} a_{k+1} a_{k+1}^{\top} V_k^{-1 / 2}\right) V_k^{1 / 2}\right) \\
& =\operatorname{det}\left(V_k\right) \operatorname{det}\left(I+\frac{1}{\epsilon_{k+1}^2} V_k^{-1 / 2} a_{k+1} a_{k+1}^{\top} V_k^{-1 / 2}\right)\\
& =\operatorname{det}\left(V_k\right)\left(1+\left\|\frac{1}{\epsilon_{k+1}} V_k^{-1 / 2} a_{k+1}\right\|^2\right) \\
& =\operatorname{det}\left(V_k\right)\left(1+\frac{1}{\epsilon_{k+1}^2}\left\|a_{k+1}\right\|_{V_k^{-1}}^2\right) \\
& =2 \operatorname{det}\left(V_k\right).
\end{aligned}
$$
Thus,$\operatorname{det}\left(V_m\right)=2^m \operatorname{det}\left(V_0\right)=2^m \lambda^d$. On the other hand,$$
\operatorname{tr}\left(V_m\right)=d \lambda+\sum_{k=1}^m \frac{\left\|a_k\right\|^2}{\epsilon_k^2} \leq d \lambda+\sum_{k=1}^m \frac{L^2}{\epsilon_k^2}.
$$

By the trace-determinant inequality,
$$
d \lambda+\sum_{k=1}^m \frac{L^2}{\epsilon_k^2} \geq \operatorname{tr}\left(V_m\right) \geq d \cdot \operatorname{det}\left(V_m\right)^{1 / d}=d \lambda \cdot 2^{m / d} .
$$
Hence,
$$
\sum_{k=1}^m \frac{1}{\epsilon_k^2} \geq \frac{d \lambda}{L^2}\left(2^{m / d}-1\right).
$$
Since the $k$-th stage contains $\frac{C u^{\frac{1}{2v}}\log(m/\delta)}{\epsilon_k^{\frac{1+v}{2v}}}$ rounds, the first $m$ stages contain $\sum_{k=1}^m \frac{C u^{\frac{1}{2v}}\log(m/\delta)}{\epsilon_k^{\frac{1+v}{2v}}}$  rounds in total. By the above argument, we have for $v \in [\frac{1}{3},1]$
$$\begin{aligned}
    \sum_{k=1}^m \frac{C u^{\frac{1}{2v}}\log(m/\delta)}{\epsilon_k^{\frac{1+v}{2v}}}&\ge \sum_{k=1}^m \frac{1}{\epsilon_k^{\frac{1+v}{2v}}} \\
    &\ge \left(\sum_{k=1}^m \frac{1}{\epsilon_k^2}\right)^{\frac{1+v}{4v}}\\
    & \ge \left(\frac{d \lambda}{L^2}\left(2^{m / d}-1\right)\right)^{\frac{1+v}{4v}}\\
    & \ge T.
\end{aligned}$$
 where the second inequality follows from Lemma \ref{lem:ineq1} for $v \in [\frac{1}{3},1]$.
\end{proof}

\begin{proof}[\bf Proof of Theorem \ref{thm:LinearReg}]
    In stage $s$, the algorithm plays action $a_s$ for $N_s=\frac{C u^{\frac{1}{2v}}\log(m/\delta)}{\epsilon_s^{\frac{1+v}{2v}}}$ rounds. 
    The regret in each round is $\left(a^*-a_s\right)^{\top} \theta^*$. By the choice of $\left(a_s, \tilde{\theta}_s\right)$, $$
\left(a^*\right)^{\top} \theta^* \leq a_s^{\top} \tilde{\theta}_s .$$
Therefore, by Cauchy-Schwarz inequality,$$
\begin{aligned}
&\left(a^*-a_s\right)^{\top} \theta^* \\
&\leq a_s^{\top}\left(\tilde{\theta}_s-\theta^*\right) \leq\left\|a_s\right\|_{V_{s-1}^{-1}}\left\|\tilde{\theta}_s-\theta^*\right\|_{V_{s-1}}\\
& \leq\left\|a_s\right\|_{V_{s-1}^{-1}}\left(\left\|\tilde{\theta}_s-\hat{\theta}_{s-1}\right\|_{V_{s-1}}+\left\|\hat{\theta}_{s-1}-\theta^*\right\|_{V_{s-1}}\right) \\
& =\epsilon_s \cdot\left(\left\|\tilde{\theta}_s-\hat{\theta}_{s-1}\right\|_{V_{s-1}}+\left\|\hat{\theta}_{s-1}-\theta^*\right\|_{V_{s-1}}\right) .
\end{aligned}$$
By the choice of $\tilde{\theta}_s$ and Lemma \ref{lem:confiSet}, with probability at least $1-\delta$, for all stages $s \ge 1$, both $\tilde{\theta}_s$ and ${\theta}^*$ lie in $\mathcal{C}_{s-1}$. Thus,$$
\left(a^*-a_s\right)^{\top} \theta^* \leq 2 \epsilon_s \cdot\left(\lambda^{1 / 2} S+\sqrt{d(s-1)}\right).$$
The cumulative regret in stage s is therefore bounded by 
$$\begin{aligned}
    &\frac{2Cu^{\frac{1}{2v}}\left(\lambda^{1 / 2} S+\sqrt{d(s-1)}\right)\log(m/\delta)}{\epsilon_s^{\frac{1-v}{2v}} }\\
=&2 \left(\lambda^{1 / 2} S+\sqrt{d(s-1)}\right)\left(C \log\frac{m}{\delta}\right)^{\frac{2v}{1+v}}u^{\frac{1}{1+v}}N_s^{\frac{1-v}{1+v}}.
\end{aligned}$$
which follows from the relationship of $N_s=\frac{C u^{\frac{1}{2v}}\log(m/\delta)}{\epsilon_s^{\frac{1+v}{2v}}}$.

Since there are at most $m$ stages by Lemma \ref{lem:stage}, the cumulative regret over all stages and rounds satisfies$$
\begin{aligned}
    &\mathcal{R}_T \\
    & \le 2 \left(\lambda^{1 / 2} S+\sqrt{d(m-1)}\right)\left(C \log\frac{m}{\delta}\right)^{\frac{2v}{1+v}}u^{\frac{1}{1+v}}\sum_{k=1}^m N_s^{\frac{1-v}{1+v}}\\
    & \le  2 \left(\lambda^{1 / 2} S+\sqrt{d(m-1)}\right)\left(C \log\frac{m}{\delta}\right)^{\frac{2v}{1+v}}u^{\frac{1}{1+v}}m T^{\frac{1-v}{1+v}}\\
    & \le O\left( \left(\lambda^{1 / 2} S m+{d^{\frac{1}{2}}m^{\frac{3}{2}}}\right)\left( \log\frac{m}{\delta}\right)^{\frac{2v}{1+v}}u^{\frac{1}{1+v}} T^{\frac{1-v}{1+v}}\right)\\
    & \le O\left( d^2  u^{\frac{1}{1+v}} T^{\frac{1-v}{1+v}}\log^{3/2} \left(\frac{L^2 T^{\frac{4v}{1+v}}}{d \lambda}+1\right) \right.\\
    & \quad \cdot \left.\left( \log\frac{d \log \left(\frac{L^2 T^{\frac{4v}{1+v}}}{d \lambda}+1\right)}{\delta}\right)^{\frac{2v}{1+v}} 
    \right)
\end{aligned}$$

For expected regret bound, let $\mathcal{E}$ be the event that the above bound holds. Note that for any $a_1,a_2 \in \mathcal{A}$, $$
\begin{aligned}
    \left|\left(a_1-a_2\right)^{\top} \theta^*\right| &\leq\left\|a_1-a_2\right\|_2\left\|\theta^*\right\|_2 \\&\leq\left(\left\|a_1\right\|_2+\left\|a_2\right\|\right)\left\|\theta^*\right\|_2 \\
    &\leq 2 L S .
\end{aligned}$$
Then, by taking $\delta=\frac{1}{T}$ we have $$
\begin{aligned}
   &\mathbb{E}[\mathcal{R}_T]\\
   =&\mathbb{E}[\mathcal{R}_T \mid \mathcal{E}] \operatorname{Pr}[\mathcal{E}]+\mathbb{E}[R(T) \mid \overline{\mathcal{E}}] \operatorname{Pr}[\overline{\mathcal{E}}] \\
   \le &  O\left( \left(\lambda^{1 / 2} S m+{d^{\frac{1}{2}}m^{\frac{3}{2}}}\right)\left( \log\frac{m}{\delta}\right)^{\frac{2v}{1+v}}u^{\frac{1}{1+v}} T^{\frac{1-v}{1+v}}\right)+2LST \frac{m}{T}\\
   = & O\left( d^2  u^{\frac{1}{1+v}} T^{\frac{1-v}{1+v}}\log^{3/2} \left(\frac{L^2 T^{\frac{4v}{1+v}}}{d \lambda}+1\right) \cdot \left( \log\frac{d \log \left(\frac{L^2 T^{\frac{4v}{1+v}}}{d \lambda}+1\right)}{\delta}\right)^{\frac{2v}{1+v}} \right)
\end{aligned}$$
where we omit $\log \log(\cdot)$ term.
\end{proof}

\section{More Numerical Experiments}\label{apx:experiment}
\begin{figure}[!ht]
    \centering
    \subfigure{
    \includegraphics[width=0.32\textwidth]{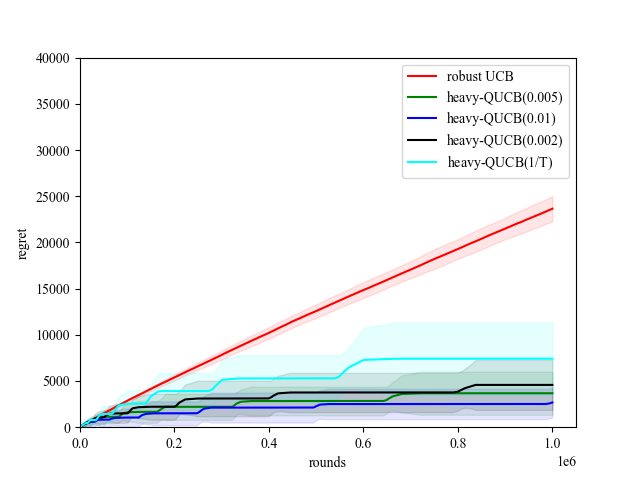}}
    \subfigure{\includegraphics[width=0.32\textwidth]{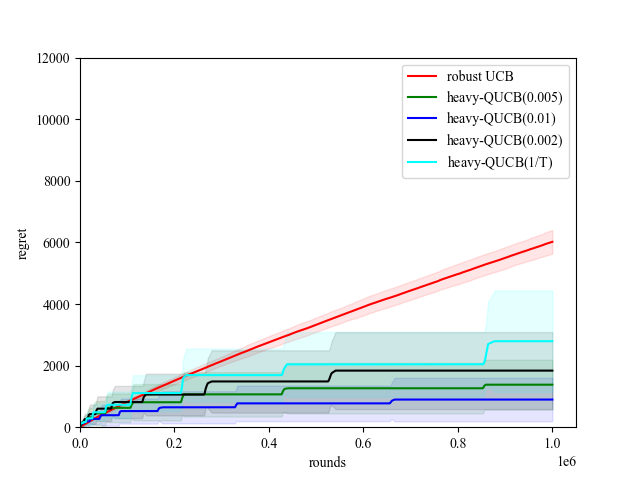}}
    \subfigure{\includegraphics[width=0.32\textwidth]{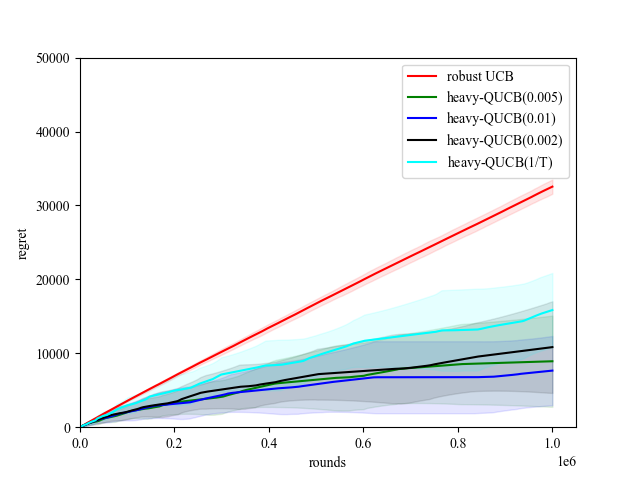}}
    \caption{Comparison between robust UCB and Heavy-QUCB with $v=0.2$}
    \label{fig:qmab_0.2}
\end{figure}

\begin{figure}[!ht]
    \centering
    \subfigure{\includegraphics[width=0.32\textwidth]{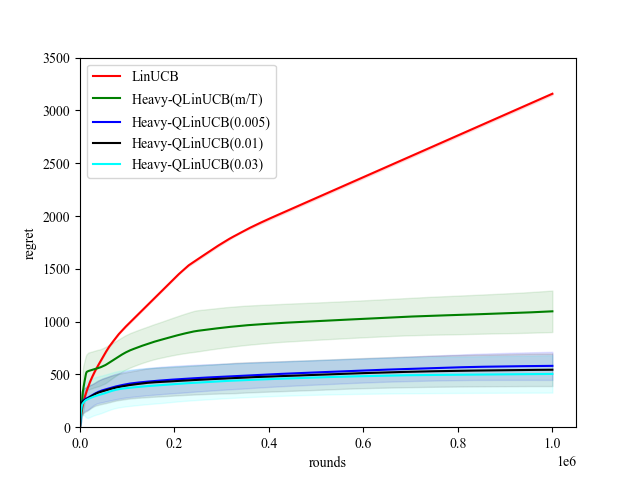}}
    \subfigure{\includegraphics[width=0.32\textwidth]{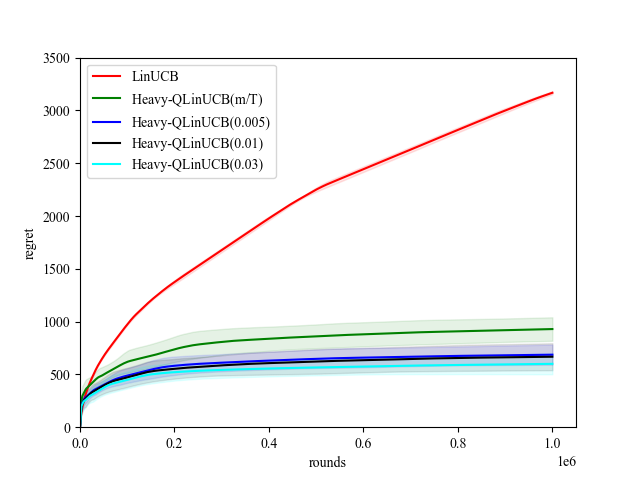}}
    \subfigure{\includegraphics[width=0.32\textwidth]{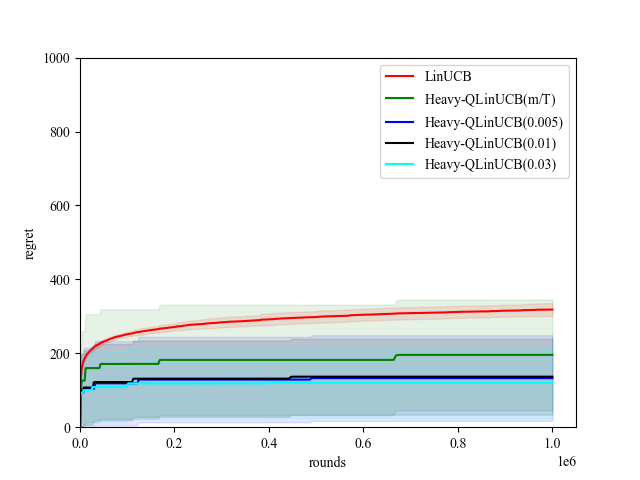}}
    \caption{Simulation for the QSLB setting with different $\delta$}
    \label{fig:qslbmy_label}
\end{figure}
\noindent \textbf{Simulation of quantum mean estimator.} As mentioned in \cite{wan2022quantum}, the closed form of the measurement output distribution given in Theorem 11 of \cite{brassard2002quantum} enables a classical way to the simulation of quantum amplitude estimation algorithm. We use this strategy to simulate our quantum mean estimator. 

\noindent \textbf{QSLB setting.} For this setting, we set $T = 10^6$ and study 3 instances: (1) $\theta^*_1 = (\cos(0.35\pi),\sin(0.35\pi))$ as in \cite{wan2022quantum}; (2) $\theta^*_2 = (\cos(\pi/6),\sin(\pi/6))$; (3) $\theta^*_3 = (\cos(5\pi/6),\sin(5\pi/6))$. As for the action set, we use a finite one that consists of 50 actions equally spaced on the positive part of the unit circle. 

As for the solver for $(a_s,\tilde{\theta}_s )$ in \Cref{Algo:Heavy-QLinMAB}, there is no efficient solver in general. However, when considering the finite action set, a straightforward approach to solving this is to enumerate all possible solutions in a small finite set. The existence of a small finite solution set with high probability is shown by \Cref{lem:confiSet} and hence, this enables the feasibility of using enumeration on solving for $(a_s, \tilde{\theta}_s)$. Following a similar idea in \cite{wan2022quantum}, the upper bound on $\|\theta - \hat{\theta}_{s-1}\|_{V_{s-1}}$ can be calculated. For completeness, $\|\theta - \hat{\theta}_{s-1}\|_{V_{s-1}} \leq \lambda^{1/2}S + \sqrt{(s-1) \|W^{1/2}_{s-1} A_{s-1} V^{-1}_{s-1} A^\top_{s-1} W^{1/2}_{s-1}\|_2}$.

We compare our algorithm with the well-known LinUCB \cite{lattimore2020bandit} for the case $v = 1$ and $\lambda = 1$, which is shown in \Cref{fig:qslbmy_label}. From the results, we can see that Heavy-QLinUCB has lower regret than LinUCB. By comparison with QLinUCB in \cite{wan2022quantum} on the first instance $\theta^*_1$, we can also observe that Heavy-QlinUCB has slightly better performance which matches the comparison in \Cref{tab:1}.
\end{document}